\def\eqref#1{equation~\ref{#1}}
\def\1{\bm{1}}
\def\vh{{\bm{h}}}
\def\vu{{\bm{u}}}
\def\vw{{\bm{w}}}
\def\vx{{\bm{x}}}
\def\mA{{\bm{A}}}
\def\mB{{\bm{B}}}
\def\mD{{\bm{D}}}
\def\mE{{\bm{E}}}
\def\mH{{\bm{H}}}
\def\mI{{\bm{I}}}
\def\mK{{\bm{K}}}
\def\mL{{\bm{L}}}
\def\mP{{\bm{P}}}
\def\mQ{{\bm{Q}}}
\def\mS{{\bm{S}}}
\def\mU{{\bm{U}}}
\def\mV{{\bm{V}}}
\def\mW{{\bm{W}}}
\def\mX{{\bm{X}}}
\def\mZ{{\bm{Z}}}
\def\mLambda{{\bm{\Lambda}}}
\DeclareMathAlphabet{\mathsfit}{\encodingdefault}{\sfdefault}{m}{sl}
\SetMathAlphabet{\mathsfit}{bold}{\encodingdefault}{\sfdefault}{bx}{n}
\def\gE{{\mathcal{E}}}
\def\gG{{\mathcal{G}}}
\def\gV{{\mathcal{V}}}
\def\emA{{A}}
\def\emD{{D}}
\newcommand{\R}{\mathbb{R}}
\newtheorem{theorem}{Theorem}
\newtheorem{proposition}{Proposition}
\DeclareRobustCommand\onedot{\futurelet\@let@token\@onedot}
\def\@onedot{\ifx\@let@token.\else.\null\fi\xspace}
\def\eg{\emph{e.g}\onedot} 
\def\ie{\emph{i.e}\onedot}
\title{Specformer: Spectral Graph Neural Networks Meet Transformers}
\author{Deyu Bo\textsuperscript{\rm 1}, 
Chuan Shi\textsuperscript{\rm 1}\thanks{Co-corresponding authors}~~,
Lele Wang\textsuperscript{\rm 2},
Renjie Liao\textsuperscript{\rm 2}$^{\ast}$ \\
Beijing University of Posts and Telecommunications\textsuperscript{\rm 1},\\
University of British Columbia\textsuperscript{\rm 2}\\
\texttt{\{bodeyu, shichuan\}@bupt.edu.cn, \{lelewang, rjliao\}@ece.ubc.ca} \\
}
\begin{document}

\maketitle

\begin{abstract}
Spectral graph neural networks (GNNs) learn graph representations via spectral-domain graph convolutions.
However, most existing spectral graph filters are scalar-to-scalar functions, i.e., mapping a single eigenvalue to a single filtered value, thus ignoring the global pattern of the spectrum. 
Furthermore, these filters are often constructed based on some fixed-order polynomials, which have limited expressiveness and flexibility.
To tackle these issues, we introduce Specformer, which effectively encodes the set of all eigenvalues and performs self-attention in the spectral domain, leading to a learnable set-to-set spectral filter. 
We also design a decoder with learnable bases to enable non-local graph convolution.
Importantly, Specformer is equivariant to permutation.
By stacking multiple Specformer layers, one can build a powerful spectral GNN.
On synthetic datasets, we show that our Specformer can better recover ground-truth spectral filters than other spectral GNNs.
Extensive experiments of both node-level and graph-level tasks on real-world graph datasets show that our Specformer outperforms state-of-the-art GNNs and learns meaningful spectrum patterns.
Code and data are available at \url{https://github.com/bdy9527/Specformer}.
\end{abstract}

\section{Introduction}\label{sec:intro}

Graph neural networks (GNNs), firstly proposed in \citep{scarselli2008graph}, become increasingly popular in the field of machine learning due to their empirical successes. 
Depending on how the graph signals (or features) are leveraged, GNNs can be roughly categorized into two classes, namely spatial GNNs and spectral GNNs.
Spatial GNNs often adopt a message passing framework \citep{gilmer2017neural,battaglia2018relational}, which learns useful graph representations via propagating local information on graphs.
Spectral GNNs \citep{bruna2013spectral,ChebyNet} instead perform graph convolutions via spectral filters (\ie, filters applied to the spectrum of the graph Laplacian), which can learn to capture non-local dependencies in graph signals.
Although spatial GNNs have achieved impressive performances in many domains, spectral GNNs are somewhat under-explored.

There are a few reasons why spectral GNNs have not been able to catch up.
First, most existing spectral filters are essentially scalar-to-scalar functions.
In particular, they take a single eigenvalue as input and apply the same filter to all eigenvalues.
This filtering mechanism could ignore the rich information embedded in the spectrum, \ie, the set of eigenvalues.
For example, we know from the spectral graph theory that the algebraic multiplicity of the eigenvalue $0$ tells us the number of connected components in the graph.
However, such information can not be captured by scalar-to-scalar filters.
Second, the spectral filters are often approximated via fixed-order (or truncated) orthonormal bases, \eg, Chebyshev polynomials \citep{ChebyNet, Chebyshev2} and graph wavelets \citep{hammond2011wavelets,xu2019graph}, in order to avoid the costly spectral decomposition of the graph Laplacian.
Although the orthonormality is a nice property, this truncated approximation is less expressive and may severely limit the graph representation learning. 


Therefore, in order to improve spectral GNNs, it is natural to ask: \emph{how can we build expressive spectral filters that can effectively leverage the spectrum of graph Laplacian}? 
To answer this question, we first note that eigenvalues of graph Laplacian represent the frequency, \ie, total variation of the corresponding eigenvectors.
The magnitudes of frequencies thus convey rich information.
Moreover, the relative difference between two eigenvalues also reflects important frequency information, \eg, the spectral gap.
To capture both magnitudes of frequency and relative frequency, we propose a Transformer \citep{vaswani2017attention} based set-to-set spectral filter, termed \emph{Specformer}.
Our Specformer first encodes the range of eigenvalues via positional embedding and then exploits the self-attention mechanism to learn relative information from the set of eigenvalues.
Relying on the learned representations of eigenvalues, we also design a decoder with a bank of learnable bases.
Finally, by combining these bases, Specformer can construct a permutation-equivariant and non-local graph convolution.
In summary, our contributions are as follows:
\begin{itemize}
\item We propose a novel Transformer-based set-to-set spectral filter along with learnable bases, called Specformer, which effectively captures both magnitudes and relative differences of all eigenvalues of the graph Laplacian.
\item We show that Specformer is permutation equivariant and can perform non-local graph convolutions, which is non-trivial to achieve in many spatial GNNs.
\item Experiments on synthetic datasets show that Specformer learns to better recover the given spectral filters than other spectral GNNs.
\item Extensive experiments on various node-level and graph-level benchmarks demonstrate that Specformer outperforms state-of-the-art GNNs and learns meaningful spectrum patterns.
\end{itemize}

\section{Related Work}
Existing GNNs can be roughly divided into two categories: spatial and spectral GNNs. 

\vspace{-10pt}
\paragraph{Spatial GNNs.}
Spatial GNNs like GAT \citep{GAT} and MPNN \citep{gilmer2017neural} leverage message passing to aggregate local information from neighborhoods.
By stacking multiple layers, spatial GNNs can possibly learn long-range dependencies but suffer from over-smoothing \citep{over-smoothing} and over-squashing \citep{over-squashing}.
Therefore, how to balance local and global information is an important research topic for spatial GNNs.
We refer readers to \citep{survey1,survey2,liao2021deep} for a more detailed discussion about spatial GNNs.

\vspace{-10pt}
\paragraph{Spectral GNNs.}
Spectral GNNs \citep{GSP1, GSP2, SGC, GNN-LF, FAGCN, robust, Spec-GN} leverage the spectrum of graph Laplacian to perform convolutions in the spectral domain.
A popular subclass of spectral GNNs leverages different kinds of orthogonal polynomials to approximate arbitrary filters, including Monomial \citep{GPR-GNN}, Chebyshev \citep{ChebyNet, GCN, Chebyshev2}, Bernstein \citep{BernNet}, and Jacobi \citep{Jacobi}.
Relying on the diagonalization of symmetric matrices, they avoid direct spectral decomposition and guarantee localization.
However, all such polynomial filters are scalar-to-scalar functions, and the bases are pre-defined, which limits their expressiveness.
Another subclass requires either full or partial spectral decomposition, such as SpectralCNN \citep{spctralCNN} and LanczosNet \citep{LanczosNet}.
They parameterize the spectral filters by neural networks, thus being more expressive than truncated polynomials. However, such spectral filters are still limited as they do not capture the dependencies among multiple eigenvalues.

\vspace{-10pt}
\paragraph{Graph Transformer.} 
Transformers and GNNs are closely relevant since the attention weights of Transformer can be seen as a weighted adjacency matrix of a fully connected graph.
Graph Transformers \citet{GT} combine both and have gained popularity recently. 
Graphormer \citep{Graphormer}, SAN \citep{SAN}, and GPS \citep{GPS} design powerful positional and structural embeddings to further improve their expressive power.
Graph Transformers still belong to spatial GNNs, although the high-cost self-attention is non-local.
The limitation of spatial attention compared to spectral attention has been discussed in \citep{over3}.

\section{Background}

In this section, we introduce some preliminaries of graph signal processing (GSP) \citep{GSP1} and Transformer \citep{transformer}.

\paragraph{Preliminary.}
Assume that we have a graph $\gG=(\gV, \gE)$, where $\gV$ denotes the node set with $|\gV|=n$ and $\gE$ is the edge set.
The corresponding adjacency matrix $\mA \in \{0,1\}^{n \times n}$, where $\emA_{ij}=1$ if there is an edge between nodes $i$ and $j$, and $A_{ij} =0$ otherwise.
The normalized graph Laplacian matrix is defined as $\mL=\mI_{n}-\mD^{-1/2} \mA \mD^{-1/2}$, where $\mI_{n}$ is the $n\times n$ identity matrix and $\mD$ is the diagonal degree matrix with diagonal entries $\emD_{ii}=\sum_{j} \emA_{ij}$ for all $i \in \mathcal{V}$ and off-diagonal entries $D_{ij} = 0$ for $i \neq j$.
We assume $\gG$ is undirected.
Hence, $\mL$ is a real symmetric matrix, whose spectral decomposition can be written as $\mL = \mU \mLambda \mU^{\top}$, where the columns of $\mU$ are the eigenvectors and $\mLambda = \text{diag} ( [\lambda_{1},\lambda_{2},\dots,\lambda_{n}] )$ are the corresponding eigenvalues ranged in $[0, 2]$. 

\paragraph{Graph Signal Processing (GSP).}
Spectral GNNs rely on several important concepts from GSP, namely, spectral filtering, graph Fourier transform and its inverse.
The graph Fourier transform is written as $\hat{\vx}=\mU^{\top} \vx$, where $\vx \in \mathbb{R}^{n \times 1}$ is a graph signal and $\hat{\vx} \in \R^{n \times 1}$ represents the Fourier coefficients.
Then a spectral filter $G_{\theta}$ is used to scale $\hat{\vx}$. 
Finally, the inverse graph Fourier transform is applied to yield the filtered signal in spatial domain $\tilde{\vx}=\mU G_{\theta} \hat{\vx}$. 
The key task in GSP is to design a powerful spectral filter $G_{\theta}$ so that we can exploit the useful frequency information.

\paragraph{Transformer.} Transformer is a powerful deep learning model, which is widely used in natural language processing \citep{BERT}, vision \citep{ViT}, and graphs \citep{Graphormer, GPS}.
Each Transformer layer consists of two components: a multi-head self-attention (MHA) module and a token-wise feed-forward network (FFN).
Given the input representations $\mH=[\vh_{1}^{\top},\dots,\vh_{n}^{\top}]\in \mathbb{R}^{n \times d}$, where $d$ is the hidden dimension, MHA first projects $\mH$ into query, key and value through three matrices ($\mW^{Q}$, $\mW^{K}$ and $\mW^{V}$) to calculate attentions. And FFN is then used to add transformation. The model can be written as follows where we denote the query dimension as $d_q$.
For simplicity, we omit the bias and the description of multi-head attention.
\begin{equation}
\begin{split}
\text{Attention}(\mQ, \mK, \mV)= \text{Softmax} (\frac{\mQ \mK^{\top}}{\sqrt{d_{q}}})\mV, \\
    \mQ=\mH\mW^{Q}, \ \mK=\mH\mW^{K}, \ \mV=\mH\mW^{V}.
\end{split}
\end{equation}

\begin{wrapfigure}[18]{r}{0.35\linewidth}
    \centering
    \includegraphics[width=\linewidth]{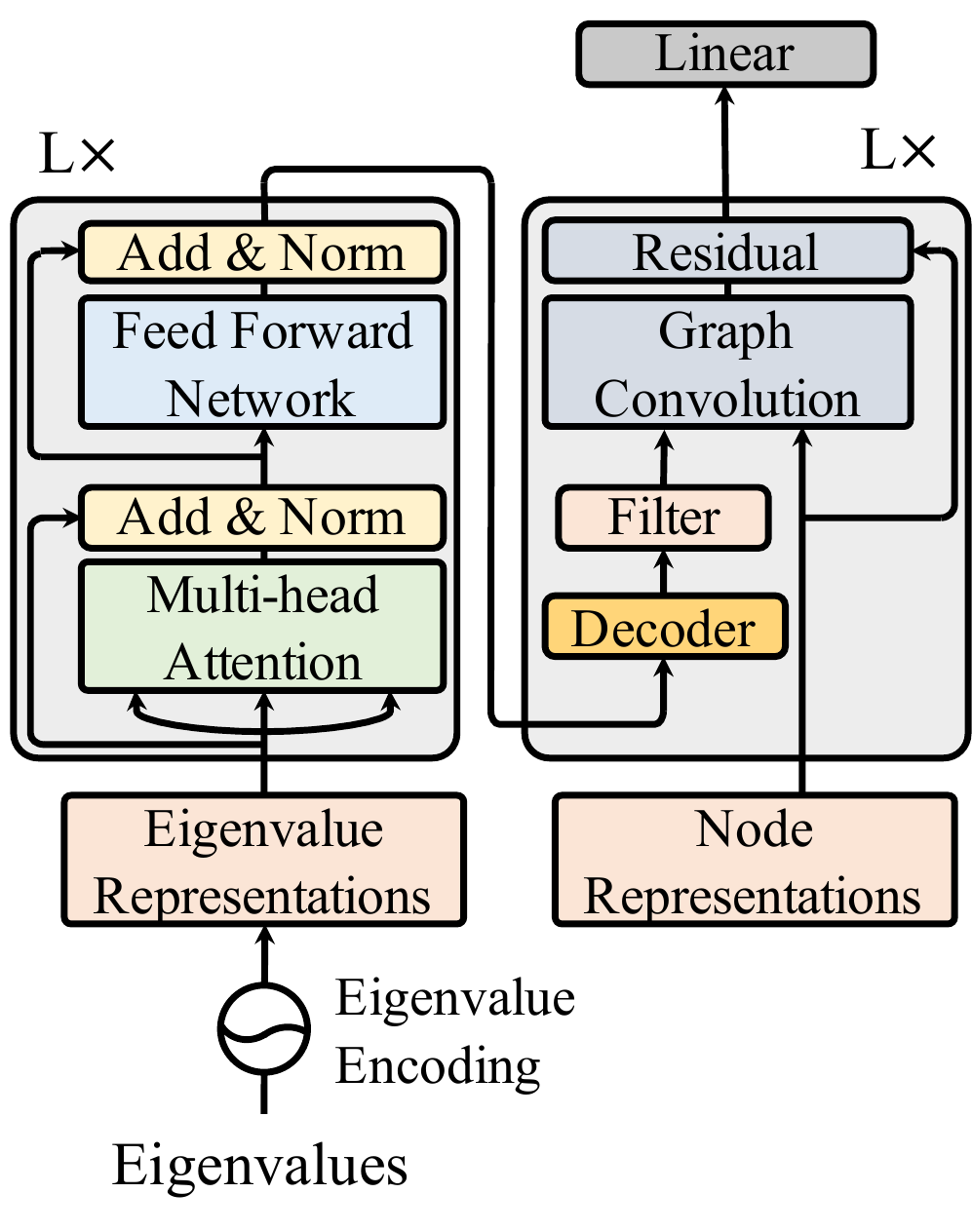}
    \vspace{-18pt}
    \caption{Illustration of the proposed Specformer.}
    \label{fig:model}
\end{wrapfigure}

\section{Specformer}
In this section, we introduce our Specformer model. Fig. \ref{fig:model} illustrates the model architecture.
We first explain how we encode the eigenvalues and use Transformer to capture their dependencies to yield useful representations.
Then we turn to the decoder, which learns new eigenvalues from the representations and reconstructs the graph Laplacian matrix for graph convolution.
Finally, we discuss the relationship between our Specformer and other methods, including MPNNs, polynomial GNNs, and graph Transformers.



\subsection{Eigenvalue Encoding}
We design a powerful set-to-set spectral filter using Transformer to leverage both magnitudes and relative differences of all eigenvalues.
However, the expressiveness of self-attention will be restricted heavily if we directly use the scalar eigenvalues to calculate the attention maps.
Therefore, it is important to find a suitable function, $\rho(\lambda): \R^{1} \rightarrow \R^{d}$, to map each eigenvalue from a scalar to a meaningful vector. 
We use an eigenvalue encoding function as follows.
\begin{equation}
\begin{split}
\rho(\lambda, 2i)&=\sin (\epsilon \lambda / 10000^{2i/d}), \\
\rho(\lambda, 2i+1)&=\cos (\epsilon \lambda / 10000^{2i/d}),
\end{split}
\label{eq: EE}
\end{equation}
where $i$ is the dimension of the representations and $\epsilon$ is a hyperparameter.
The benefits of $\rho(\lambda)$ are three-fold:
(1) It can capture the relative frequency shifts of eigenvalues and provides high-dimension vector representations.
(2) It has the wavelengths from $2\pi$ to $10000 \cdot 2\pi$, which forms a multi-scale representation for eigenvalues.
(3) It can control the influence of $\lambda$ by adjusting the value of $\epsilon$.

The choice of $\epsilon$ is crucial because we find that only the first few dimensions of $\rho(\lambda)$ can distinguish different eigenvalues if we simply set $\epsilon=1$.
The reason is that eigenvalues lie in the range $[0, 2]$, and the value of $\lambda/10000^{2i/d}$ will change slightly when $i$ becomes larger.
Therefore, it is important to assign a large value of $\epsilon$ to enlarge the influence of $\lambda$.
Experiments can be seen in Appendix \ref{appendix: EE}.

Notably, although the eigenvalue encoding (EE) is similar to the positional encoding (PE) of Transformer, they act quite differently.
PE describes the information of discrete positions in the spatial domain.
While EE represents the information of continuous eigenvalues in the spectral domain.
Applying PE to the spatial positions (\ie, indices) of eigenvalues will destroy the permutation equivariance property, thereby impairing the learning ability.



The initial representations of eigenvalues are the concatenation of eigenvalues and their encodings, $\mZ = [\lambda_{1} \| \rho(\lambda_{1}), \cdots, \lambda_{n} \| \rho(\lambda_{n})]^{\top} \in \R^{n \times (d+1)}$.
Then a standard Transformer block is used to learn the dependency between eigenvalues.
We first apply layer normalization (LN) on the representations before feeding them into other sub-layers, \ie, MHA and FFN.
This \emph{pre-norm} trick has been used widely to improve the optimization of Transformer \citep{Graphormer}:
\begin{equation}
\begin{split}
    \tilde{\mZ} &= \text{MHA}(\text{LN}(\mZ)) + \mZ, \\
    \hat{\mZ} &= \text{FFN}(\text{LN}(\tilde{\mZ})) + \tilde{\mZ}.
\end{split}
\end{equation}
After stacking multiple Transformer blocks, we obtain the expressive representations of the spectrum.

\subsection{Eigenvalue Decoding}

Based on the representations returned by the encoder, the decoder can learn new eigenvalues for spectral filtering.
Recent studies \citep{Spec-GN, Jacobi} show that assigning each feature dimension a separate spectral filter improves the performance of GNNs.
Motivated by this discovery, our decoder first decodes several bases. 
An FFN is then used to combine these bases to construct the final graph convolution.

\paragraph{Spectral filters.}
In general, the bases should learn to cover different information of the graph signal space as much as possible.
For this purpose, we utilize the multi-head attention mechanism because each head has its own self-attention module.
Specifically, the representations learned by each head will be fed into the decoder to perform spectral filtering to get the new eigenvalues.
\begin{equation}
    \mZ_{m} = \text{Attention}(\mQ\mW^{Q}_{m}, \mK\mW^{K}_{m}, \mV\mW^{V}_{m}), \quad \bm{\lambda}_{m}=\phi(\mZ_{m} \mW_{\lambda}),
\label{eq: decoder}
\end{equation}
where $\mZ_{m}$ denotes the representations learned by the $m$-th heads, and $\phi$ is the activation, \eg, ReLU or Tanh, which is optional.
$\bm{\lambda}_{m} \in \R^{n \times 1}$ is the $m$-th eigenvalues after the spectral filtering.



\paragraph{Learnable bases.} 
After get $M$ filtered eigenvalues, we use a FFN: $\R^{M+1} \rightarrow \R^{d}$ to construct the learnable bases. 
We first reconstruct individual new bases, concatenate them along the channel dimension, and feed them to a FFN as below,
\begin{equation}
    \mS_{m}=\mU \text{diag}(\bm{\lambda}_{m}) \mU^{\top}, \quad \hat{\mS} = \text{FFN}([\mI_{n} || \mS_{1} || \cdots || \mS_{M}]),
\end{equation}
where $\mS_{m} \in \R^{n \times n}$ is the $m$-th new basis and $\hat{\mS} \in \R^{n \times n \times d}$ is the combined version.
Note that our bases here serve similar purpose as those polynomial bases in the literature.
But the way they are combined is learned rather than following certain recursions as in Chebyshev polynomials.

We have three optional ways to leverage this design of new bases.
(1) Shared filters and shared FFN. This model has the least parameters, where the basis $\hat{\mS}$ is shared across all graph convolutional layers.
(2) Shared filters and layer-specific FFN, which compromises between parameters and performance, \eg, $\hat{\mS}^{(l)} = \text{FFN}^{(l)}([\mI_{n} || \mS_{1} || \cdots || \mS_{M}])$ where the superscript $l$ denotes the index of layer.
(3) Layer-specific filters and layer-specific FFN. This model has the most parameters and each layer has its own encoder and decoder, \eg, $\hat{\mS}^{(l)} = \text{FFN}^{(l)}([\mI_{n} || \mS_{1}^{(l)} || \cdots || \mS_{M}^{(l)}])$.
We refer these three models as Specformer-Small, Specformer-Medium, and Specformer-Large.

\subsection{Graph Convolution}

Finally, we assign each feature dimension a separate graph Laplacian matrix based on the learned basis $\hat{\mS}$, which can be written as follows:
\begin{equation}
    \hat{\mX}^{(l-1)}_{:,i} = \hat{\mS}_{:,:,i} \mX^{(l-1)}_{:,i}, \quad
    \mX^{(l)} =  \sigma \left(\hat{\mX}^{(l-1)} \mW_{x}^{(l-1)} \right) + \mX^{(l-1)},
\label{eq: convolution}
\end{equation}
where $\mX^{(l)}$ is the node representations in the $l$-th layer, $\hat{\mX}^{(l-1)}_{:,i}$ is the $i$-th channel dimension, $\mW_{x}^{(l-1)}$ is the transformation, and $\sigma$ is the activation.
The residual connection is an optional choice.
By stacking multiple graph convolutional layers, Specformer can effectively learn node representations.

\subsection{Key Properties Compared to Related Models}

\paragraph{Specformer \textit{v.s.} Polynomial GNNs.} Specformer replaces the fixed bases of polynomials, \eg, $\lambda, \lambda^{2}, \cdots, \lambda^{k}$, with learnable bases, which has two major advantages:
(1) Universality. Polynomial GNNs are the special cases of Specformer because the learnable bases can approximate any polynomials.
(2) Flexibility. Polynomial GNNs are designed to learn a shared function for all eigenvalues
whereas Specformer can learn eigenvalue-specific functions, thus being more flexible.


\vspace{-7pt}
\paragraph{Specformer \textit{v.s.} MPNNs.} MPNNs aggregate the local information from neighborhood one hop per layer. This localization capability enables MPNNs with high computational efficiency but weakens the ability in capturing the global information.
Specformer is inherently non-local due to the use of (often dense) eigenvectors. 
The learned graph Laplacian $\mU G_{\theta} \mU^{\top} = \theta_{1}\vu_{1}\vu_{1}^{\top} + \cdots + \theta_{n}\vu_{n}\vu_{n}^{\top}$. 
Because $\vu_{i}$ is a eigenvector, $\vu_{i}\vu_{i}^{\top}$ constructs a fully-connected graph. 
Therefore, our Specformer can break the localization of MPNNs and leverage global information.

\vspace{-7pt}
\paragraph{Specformer \textit{v.s.} Graph Transformers.}
Graphormer \citep{Graphormer} has shown that graph Transformer can perform well on graph-level tasks.
However, existing graph Transformers do not show competitiveness in the node-level tasks, \eg, node classification.
Recent studies \citep{over3, over1, over2} provide some evidence for this phenomenon. They show that Transformer is essentially a low-pass filter. Therefore, graph Transformers cannot handle the complex node label distribution, \eg, homophilic and heterophilic.
On the contrary, as we will see in the experiment section, Specformer can learn arbitrary bases for graph convolution and perform well on both node-level and graph-level tasks.

Besides the above advantages, we show that our Specformer has the following theoretical properties.

\begin{proposition}
\label{propos: pe}
Specformer is permutation equivariant.
\end{proposition}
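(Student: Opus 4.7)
The plan is to show that for any $n\times n$ permutation matrix $\mP$, replacing the adjacency by $\mP\mA\mP^\top$ and the features by $\mP\mX$ sends the output of every Specformer layer to its $\mP$-permutation. The key observation is that $\mP\mL\mP^\top=(\mP\mU)\mLambda(\mP\mU)^\top$, so the multiset of eigenvalues is invariant and a valid eigenvector matrix for the permuted Laplacian is $\mP\mU$. Fixing the same eigenvalue ordering before and after permutation, the encoder input is unchanged: the eigenvalue encoding $\rho(\lambda)$ in Eq.~\ref{eq: EE} depends only on the value of $\lambda$ and never on a spatial index, so the token matrix $\mZ$ is identical for the two graphs. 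Because self-attention, layer normalization, and the token-wise FFN are all permutation equivariant on their input tokens, the filtered eigenvalues $\bm{\lambda}_m$ produced by Eq.~\ref{eq: decoder} are likewise identical.

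Next I would push this invariance through the decoder and the graph convolution. Using the eigenvector matrix $\mP\mU$ together with the invariance of $\bm{\lambda}_m$ gives
\begin{equation*}
\mS_m'=(\mP\mU)\,\mathrm{diag}(\bm{\lambda}_m)\,(\mP\mU)^\top=\mP\mS_m\mP^\top.
\end{equation*}
Since $\mP\mI_n\mP^\top=\mI_n$ and the combining FFN acts entry-wise on the spatial dimensions of the stacked tensor $[\mI_n\|\mS_1\|\cdots\|\mS_M]$, every channel of the output obeys $\hat{\mS}'_{:,:,i}=\mP\hat{\mS}_{:,:,i}\mP^\top$. Substituting into Eq.~\ref{eq: convolution},
\begin{equation*}
\hat{\mX}'^{(l-1)}_{:,i}=\mP\hat{\mS}_{:,:,i}\mP^\top\mP\mX^{(l-1)}_{:,i}=\mP\hat{\mX}^{(l-1)}_{:,i},
\end{equation*}
and the row-wise application of $\mW_{x}^{(l-1)}$, the elementwise nonlinearity, and the residual connection preserve this relation, so $\mX'^{(l)}=\mP\mX^{(l)}$ follows by induction on the layer index.

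The main obstacle is the non-uniqueness of the eigendecomposition when $\mL$ has repeated eigenvalues, in which case the eigenvector matrix returned for $\mP\mL\mP^\top$ need not equal $\mP\mU$ but may differ by a block-orthogonal transformation $\mO$ acting within each eigenspace. I would address this in two steps. First, equal eigenvalues produce identical rows in $\mZ$, and since MHA, LN, and FFN are permutation equivariant and act identically on identical inputs, the filtered values in $\bm{\lambda}_m$ coincide across any degenerate eigenspace; hence $\mathrm{diag}(\bm{\lambda}_m)$ restricted to that block is a scalar multiple of the identity and commutes with $\mO$. Second, conclude $(\mP\mU\mO)\,\mathrm{diag}(\bm{\lambda}_m)\,(\mP\mU\mO)^\top=\mP\mS_m\mP^\top$, so the reconstructed basis $\mS_m'$ is independent of the choice of orthonormal basis within each eigenspace and the equivariance chain above goes through unchanged.
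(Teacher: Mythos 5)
Your proof is correct, and it follows the same basic strategy as the paper---push the permutation through each module (eigenvalue encoder, self-attention, basis reconstruction, convolution)---but it is organized differently and goes further. The paper's proof is a component-wise checklist: element-wise maps (EE, FFN, LN) are equivariant, self-attention is equivariant because $(\mP\mZ)(\mP\mZ)^{\top}=\mP(\mZ\mZ^{\top})\mP^{\top}$, and the basis construction is equivariant because conjugating $\mU$ and $\mLambda$ by $\mP$ conjugates $\mU\mLambda\mU^{\top}$; it thereby treats both the reordering of eigenvalue tokens and the node relabeling, but only at the level of individual blocks. You instead fix a sorted eigenvalue ordering (so token-order equivariance of the encoder is not needed), phrase the statement precisely as equivariance of the end-to-end map under node relabeling $\mA\mapsto\mP\mA\mP^{\top}$, $\mX\mapsto\mP\mX$, and close the argument by induction over layers. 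The genuine addition is your last paragraph: the paper implicitly assumes the eigendecomposition of $\mP\mL\mP^{\top}$ is returned as $\mP\mU$, whereas you handle the gauge ambiguity $\mP\mU\mO$ for repeated eigenvalues by observing that equal eigenvalues give identical tokens, hence identical filtered values $\bm{\lambda}_m$ within each degenerate eigenspace, so $\mathrm{diag}(\bm{\lambda}_m)$ commutes with the block-orthogonal $\mO$ and $\mS_m$ is well defined. This matters in practice (Laplacians of real graphs routinely have high-multiplicity eigenvalues), so your argument establishes a slightly stronger and better-founded claim than the paper's sketch; just note explicitly that the within-eigenspace constancy also relies on MHA producing equal outputs for equal tokens, which holds because the query of a token and the key/value multiset determine its output.
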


\begin{proposition}
\label{propos: approximate}
Specformer can approximate any univariate and multivariate continuous functions.
\end{proposition}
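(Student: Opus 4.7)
The plan is to split the claim into two parts and handle each with standard universal-approximation machinery. First I will establish the univariate case: for any continuous $f:[0,2]\to\R$ and any $\delta>0$ there is a choice of Specformer parameters whose induced scalar-to-scalar filter $\hat f$ satisfies $\sup_{\lambda\in[0,2]}|f(\lambda)-\hat f(\lambda)|<\delta$. Then I will address the multivariate (set-to-set) case by arguing that the self-attention blocks, together with the eigenvalue encoding, can approximate any continuous permutation-equivariant map $F:[0,2]^n\to\R^n$ to arbitrary accuracy on compacta.

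For the univariate part, the key ingredient is the eigenvalue encoding $\rho$ defined in \eqref{eq: EE}. By taking the hidden dimension $d$ large enough and choosing $\epsilon$ appropriately, the coordinates of $\rho(\lambda)$ are sines and cosines at geometrically-spaced frequencies whose span is dense in $C([0,2])$ by classical density of trigonometric polynomials on a compact interval (or by Stone--Weierstrass, since these features separate points and contain the constant $1$ after concatenating $\lambda$). Composing $\rho$ with an FFN, which by the Hornik/Cybenko universal approximation theorem can realize any continuous function of its inputs to arbitrary accuracy, yields the desired approximation of $f$. This in particular shows that Specformer subsumes any polynomial or scalar-to-scalar spectral filter.

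For the multivariate part, I will invoke the universal approximation theorem for Transformers of Yun~et~al.\ (2020). Since the only input-dependent embedding used by Specformer is $\rho$, which is applied identically to every eigenvalue and carries no positional information, the encoder output is permutation equivariant in the input eigenvalue set (this is exactly Proposition~\ref{propos: pe}). The Transformer universality result then implies that stacking sufficiently many self-attention + FFN blocks can approximate any continuous permutation-equivariant function from $[0,2]^n$ to $\R^n$ in $L^p$ norm on compact sets. Passing the learned eigenvalues through the multi-head decoder of \eqref{eq: decoder} and the basis-combining FFN preserves this approximation power up to a further universal FFN layer, so any continuous multivariate filter on the spectrum can be approximated as claimed.

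The hardest step I anticipate is the multivariate one: carefully matching Specformer's concrete architecture (encoding via $\rho$, pre-norm Transformer blocks, multi-head decoder, and learnable basis combination) to the hypotheses of the Transformer universality results, which are typically stated with explicit positional encodings and without layer normalization. The obstacle is to verify that removing positional encoding does not destroy universality once we restrict the target to the permutation-equivariant subclass, and that the pre-norm residual structure does not shrink the expressive power (a standard but technical calculation showing that LN composed with residual FFN still realizes a dense family of maps). The univariate step is comparatively routine, but I will still need to check, as the authors note in their discussion of $\epsilon$, that $\epsilon$ is chosen large enough so that the higher-frequency dimensions of $\rho(\lambda)$ do not collapse on the narrow interval $[0,2]$; otherwise the encoding would fail to separate points and the density argument would break down.
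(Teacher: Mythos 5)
Your univariate step is essentially the paper's own argument: both of you exploit the sinusoidal eigenvalue encoding of \eqref{eq: EE} as Fourier-type features on $[0,2]$ — the paper freezes the attention to the identity, views a linear readout $\rho(\lambda)\vw$ as a truncated Fourier series and invokes uniform convergence (Theorem~\ref{theorem: approximate}), re-choosing harmonics $\sin(i\lambda),\cos(i\lambda)$ when orthogonality is wanted, while you compose $\rho$ with a universal FFN. Your multivariate step, however, takes a genuinely different route. The paper never appeals to Transformer universality: it uses the Kolmogorov--Arnold / DeepSets representation $f(x_1,\dots,x_M)=\rho\bigl(\sum_m \lambda_m\phi(x_m)\bigr)$ (Theorem~\ref{theorem: deepsets}), identifying the inner map $\phi$ with the eigenvalue encoding, the coefficients $\lambda_m$ with attention weights, and the outer map with the decoder FFN of \eqref{eq: decoder}, approximated by deep ReLU networks. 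You instead invoke the Yun et al.\ (2020) universality theorem for Transformers without positional encodings on the class of permutation-equivariant sequence-to-sequence functions. The trade-off: the paper's decomposition is elementary and needs only one attention layer as a mixing step, but it establishes approximation of a scalar (symmetric) function of the spectrum and is loose about whether row-normalized softmax weights can play the role of arbitrary coefficients $\lambda_m$; your route targets equivariant set-to-set maps $[0,2]^n\rightarrow\R^n$, which matches the actual filter architecture and is in that sense stronger, but the guarantee is only in $L^p$ norm and requires the nontrivial bookkeeping you correctly flag (pre-norm layer normalization, residual structure, multi-head decoder versus the theorem's hypotheses). One caution on your univariate sketch: the span of sines and cosines at the geometric frequencies $\epsilon/10000^{2i/d}$ is not dense by ``classical density of trigonometric polynomials'' (that statement concerns harmonic frequencies, and Stone--Weierstrass requires an algebra); your argument survives because the concatenated feature $[\lambda\,\|\,\rho(\lambda)]$ separates points of $[0,2]$, so a universal FFN on top of an injective continuous encoding suffices — which is, in effect, also why the paper switches to $\sin(i\lambda),\cos(i\lambda)$ when it wants a genuine Fourier basis.
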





Proposition \ref{propos: pe} shows that Specformer can learn permutation-equivariant node representations.
Proposition \ref{propos: approximate} states that Specformer is more expressive than other graph filters.
First, the ability to approximate any univariate functions generalizes existing scalar-to-scalar filters.
Besides, Specformer can handle multiple eigenvalues and learn multivariate functions, so it can approximate a broader range of filter functions than scalar-to-scalar filters.
All proofs are provided in Appendix \ref{appendix: theoretical}


\paragraph{Complexity.}
Specformer has two parts of computation: spectral decomposition and forward process.
Spectral decomposition is pre-computed and has the complexity of $\mathcal{O}(n^3)$.
The forward complexity has three parts: Transformer, learnable bases, and graph convolution.
Their corresponding complexities are $\mathcal{O}(n^{2}d + nd^{2})$, $\mathcal{O}(Mn^{2})$ and $\mathcal{O}(Lnd)$, respectively, where $n, M, L$ represent the number of nodes, filters, and layers, and $d$ is the hidden dimension.
The overall forward complexity is $\mathcal{O}(n^{2}(d+M) + nd(L+d))$.
The overall complexity of Specformer is the sum of the forward complexity and the decomposition complexity amortized over the number of uses in training and inference, rather than a simple summation of the two.
See Appendix \ref{appendix: overhead} for more discussion.

\vspace{-7pt}
\paragraph{Scalability.} When applying Specformer to large graphs, one can use the Sparse Generalized Eigenvalue (SGE) algorithms \citep{SGE1} to calculate $q$ eigenvalues and eigenvectors, in which case the forward complexity will reduce to $(q^{2}(d+M) + nd(L+d))$.
\vspace{-5pt}

\section{Experiments}
\vspace{-5pt}
In this section, we conduct experiments on a synthetic dataset and a wide range of real-world graph datasets to verify the effectiveness of our Specformer.

\begin{table}[t]
  \centering
  \caption{Node regression results, mean of the sum of squared error \& $R^{2}$ score, on synthetic data.}
  \resizebox{0.95\linewidth}{!}{
    \begin{tabular}{lccccc}
    \toprule
        \multicolumn{1}{c}{\multirow{2}[4]{*}{\makecell[c]{\textbf{Model}\\($\sim$\textbf{2k param.})}}} &
        \multicolumn{1}{c}{\textbf{Low-pass}} & \multicolumn{1}{c}{\textbf{High-pass}} & \multicolumn{1}{c}{\textbf{Band-pass}} & \multicolumn{1}{c}{\textbf{Band-rejection}} & \multicolumn{1}{c}{\textbf{Comb}}\\
\cmidrule(lr){2-6}  & \begin{small} $\exp(-10 \lambda^{2})$ \end{small} 
& \begin{small} $1 - \exp(-10 \lambda^{2})$ \end{small}
& \begin{small} $\exp(-10 (\lambda - 1)^{2})$ \end{small} 
& \begin{small} $1 - \exp(-10 (\lambda - 1)^{2})$ \end{small} 
& \begin{small} $\left| \sin(\pi \lambda) \right|$ \end{small} \\
    \midrule
    GCN         & 3.4799(.9872) & 67.6635(.2364) & 25.8755(.1148) & 21.0747(.9438) & 50.5120(.2977) \\
    GAT         & 2.3574(.9905) & 21.9618(.7529) & 14.4326(.4823) & 12.6384(.9652) & 23.1813(.6957) \\
    ChebyNet    & 0.8220(.9973) & 0.7867(.9903) & 2.2722(.9104) & 2.5296(.9934) & 4.0735(.9447) \\
    GPR-GNN     & 0.4169(.9984) & 0.0943(.9986) & 3.5121(.8551) & 3.7917(.9905) & 4.6549(.9311) \\
    BernNet     & 0.0314(.9999) & 0.0113(.9999) & 0.0411(.9984) & 0.9313(.9973) & 0.9982(.9868) \\
    JacobiConv  & 0.0003(.9999) & 0.0064(.9999) & 0.0213(.9999) & 0.0156(.9999) & 0.2933(.9995) \\
    Specformer  & \textbf{0.0002(.9999)} & \textbf{0.0026(.9999)} & \textbf{0.0017(.9999)} & \textbf{0.0014(.9999)} & \textbf{0.0057(.9999)} \\
    \bottomrule
    \end{tabular}}%
    \label{tab:toy_regression}%
    \vspace{-5pt}
\end{table}%

\begin{figure}[t]
    \centering
    \subfigure[High-pass]{
        \includegraphics[width=0.3\linewidth]{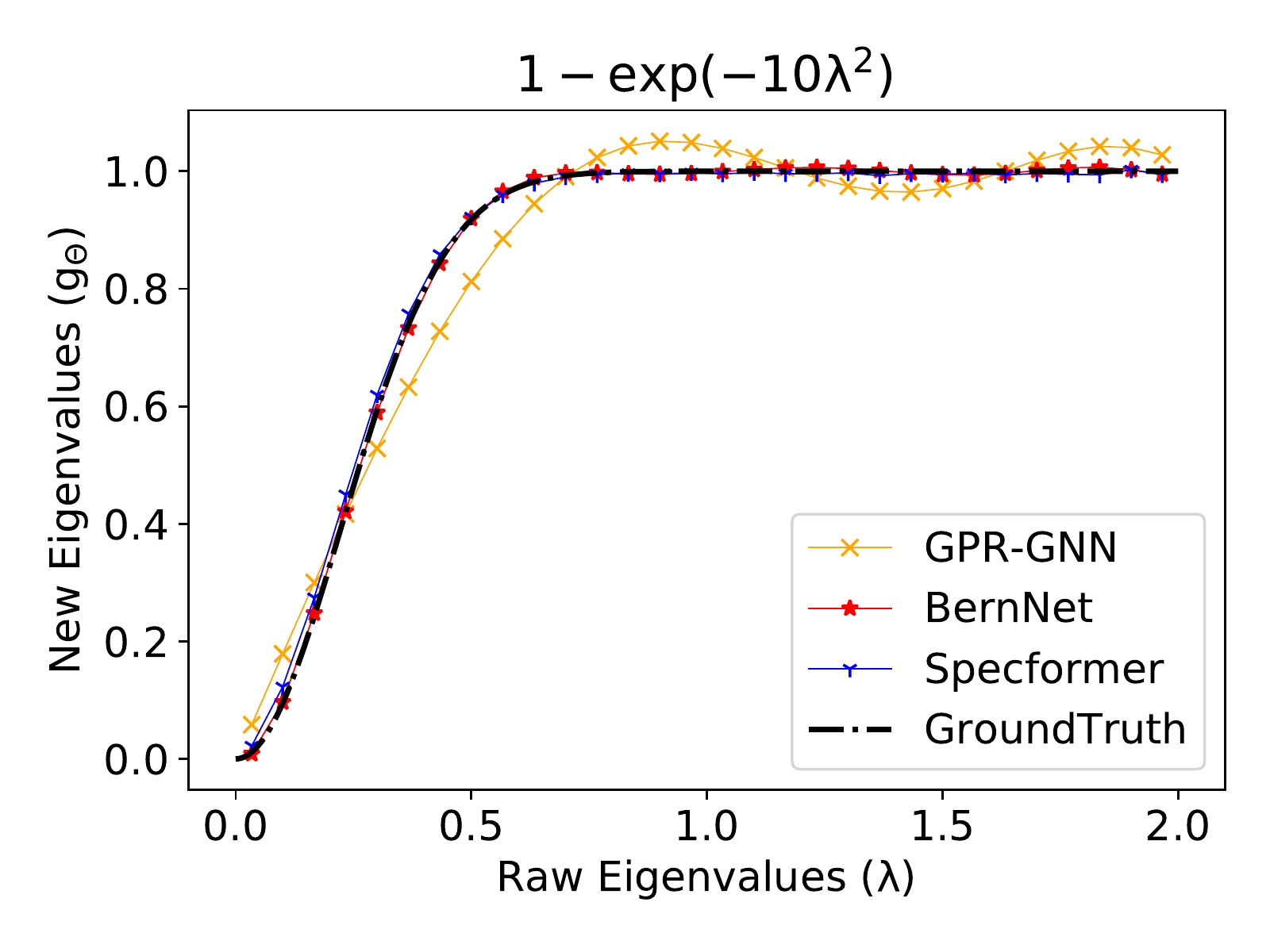}
    }
    \subfigure[Band-rejection]{
        \includegraphics[width=0.3\linewidth]{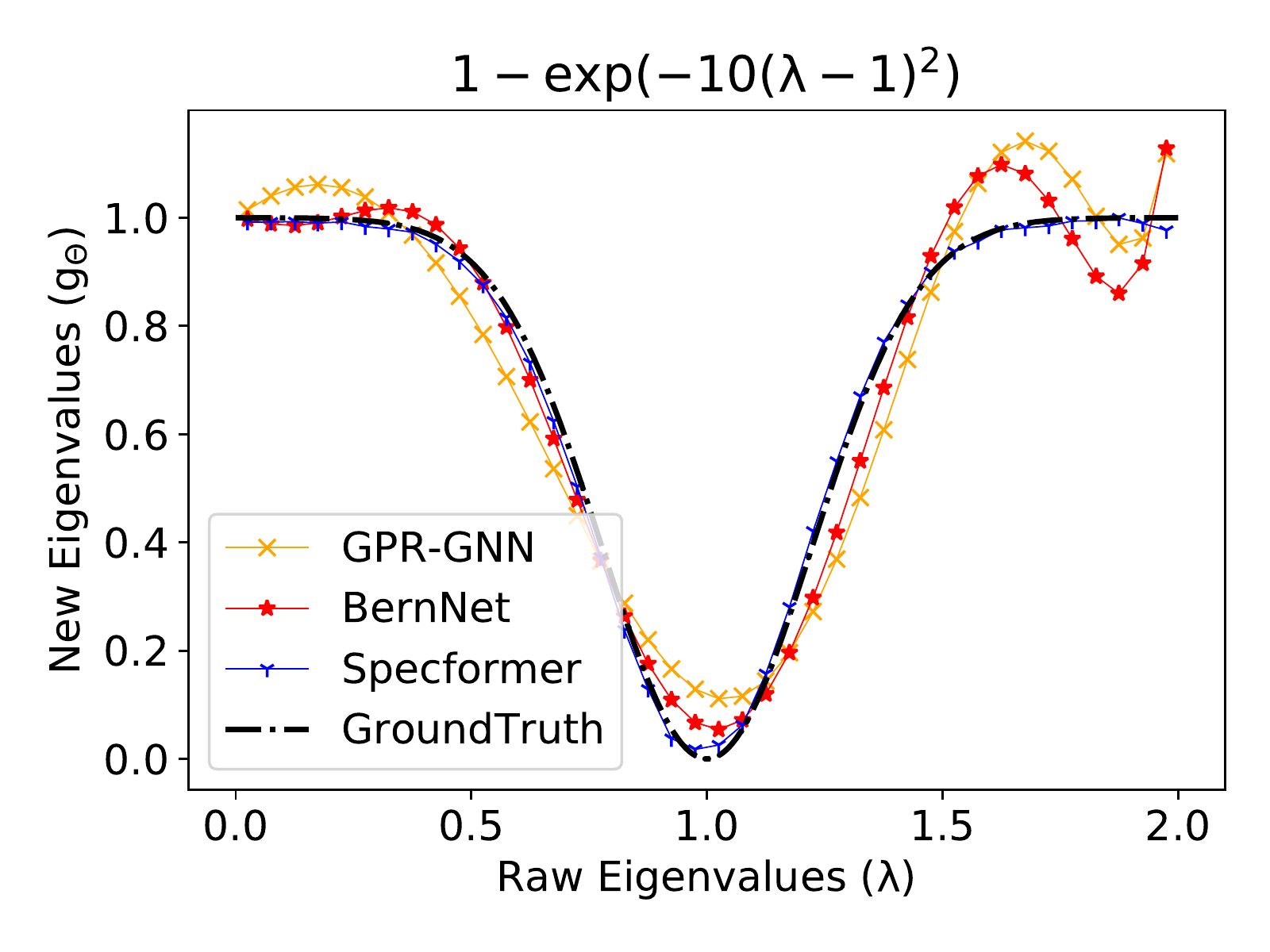}
    }
    \subfigure[Comb]{
        \includegraphics[width=0.3\linewidth]{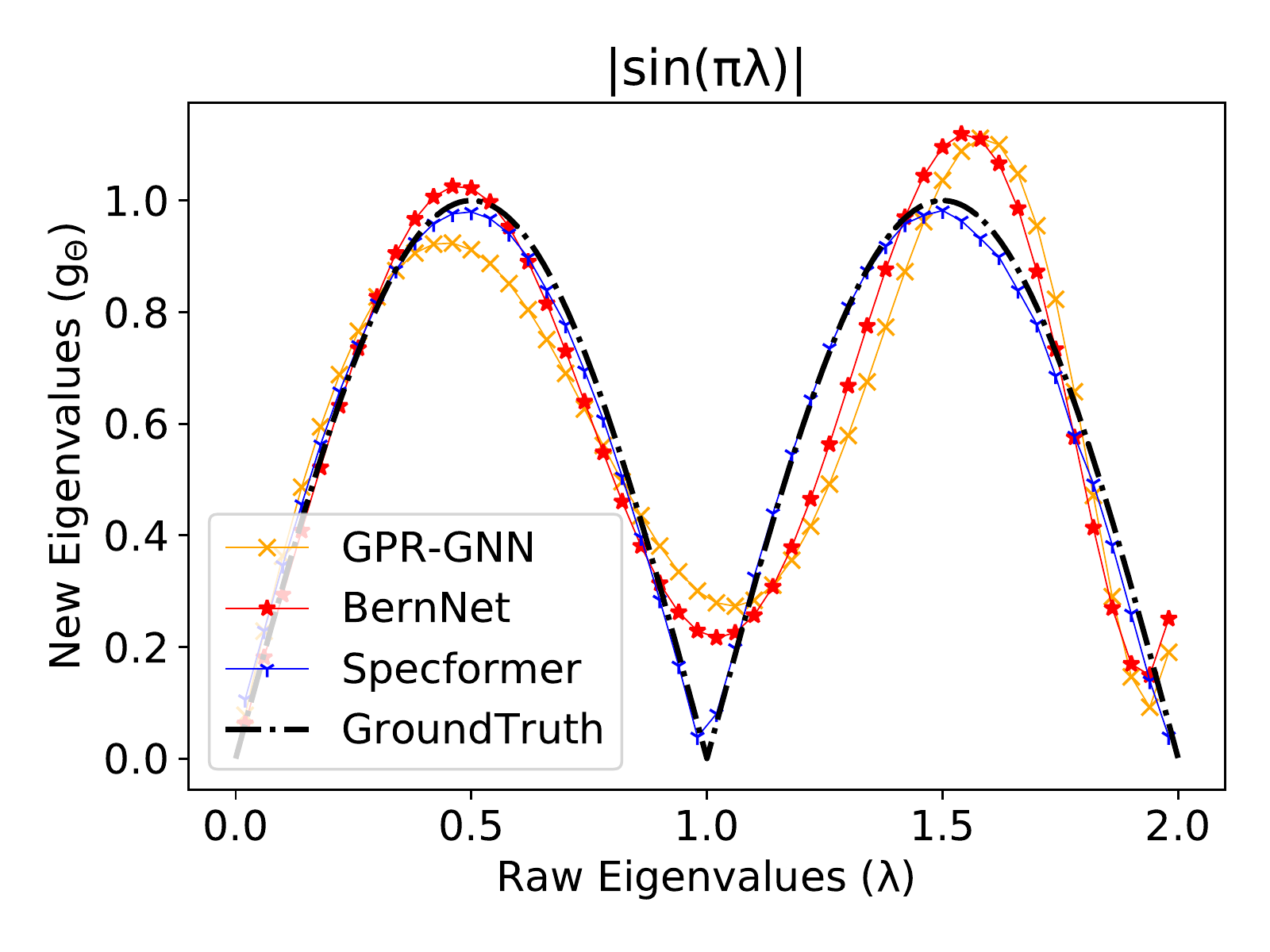}
    }
    \vspace{-6pt}
    \caption{Illustrations of filters learned by two polynomial GNNs and Specformer.}
    \label{fig:toy_filter}
    \vspace{-6pt}
\end{figure}

\subsection{Learning Spectral Filters on Synthetic Data}


\vspace{-5pt}
\paragraph{Dataset description.} We take 50 images with the resolution of 100$\times$100 from the Image Processing Toolbox \footnote{https://ww2.mathworks.cn/products/image.html}. Each image is processed as a 2D regular 4-neighborhood grid graph, and the values of pixels are the node features. Therefore, these images share the same adjacency matrix $\mA \in \R^{10000 \times 10000}$ and the $m$-th image has its graph signal $\vx_{m} \in \R^{10000 \times 1}$. Five predefined graph filters are used to generate ground truth graph signals. For example, if we use the low-pass filter with $G_{\theta} = \exp (-10 \lambda^{2})$, the filtered graph signal is calculated by $\tilde{\vx}_{m}=\mU \text{diag}[\exp (-10 \lambda^{2}_{1}), \dots, \exp (-10 \lambda^{2}_{n})] \mU^{\top}\vx_{m}$.

\vspace{-10pt}
\paragraph{Setup.} We choose six Spectral GNNs as baselines: GCN \citep{GCN}, GAT \citep{GAT}, ChebyNet \citep{ChebyNet}, GPR-GNN \citep{GPR-GNN}, BernNet \citep{BernNet}, and JacobiConv \citep{Jacobi}.
Each method takes $\mA$ and $\vx_{m}$ as inputs, and tries to minimize the sum of squared error between the outputs $\hat{\vx}_{m}$ and the pre-filtered graph signal $\tilde{\vx}_{m}$.
We tune the number of hidden units to ensure that each method has nearly $2$K trainable parameters. The polynomial order is set to 10 for ChebyNet, GPR-GNN, and BernNet.
For our model, we use Specformer-Small with 16 hidden units and 1 head.
In training, the maximum number of epochs is set to 2000, and the model will be stopped early if the loss does not descend 200 epochs.
All regularization tricks are removed.
The learning rate is set to 0.01 for all models.
We use two metrics to evaluate each method: sum of squared error and $R^{2}$ score.

\vspace{-10pt}
\paragraph{Results.} The quantitative experiment results are shown in Table \ref{tab:toy_regression}, from which we can see that Specformer achieves the best performance on all synthetic graphs.
Especially, it has more improvements on challenging graphs, such as Band-rejection and Comb. 
This validates the effectiveness of Specformer in learning complex graph filters.
In addition, we can see that GCN and GAT only perform better on the homophilic graph, which reflects that only using low-frequency information is not enough. 
Polynomial-based GNNs, \ie, ChebyNet, GPR-GNN, BernNet, and JacobiConv, have more stable performances. 
But their expressiveness is still weaker than Specformer.
We visualize the graph filters learned by GPR-GNN, BernNet, and Specformer in Figure \ref{fig:toy_filter}, which further validates our claims.
The horizontal axis presents the original eigenvalues, and the vertical axis indicates the corresponding new eigenvalues.
For clarity, we uniformly downsample the eigenvalues at a ratio of 1:200 and only visualize three graphs because the situations of Low-pass and High-pass are similar. The same goes for Band-pass and Band-rejection.
It can be seen that all methods can fit the easy filters well, \ie, High-pass. However, the polynomial-based GNNs cannot learn the narrow bands in Band-rejection and Comb, \eg, $\lambda \in [0.75, 1.25]$, which harms their performance. 
On the contrary, Specformer fits the ground truth precisely, reflecting the superior learning ability over polynomials.
The spatial results, \ie, filtered images, can be seen in Appendix \ref{appendix: spatial}.


\subsection{Node Classification}

\vspace{-5pt}
\paragraph{Datasets.} For the node classification task, we perform experiments on four homophilic datasets, \ie, Cora, Citeser, Amazon-Photo and ogbn-arXiv, and four heterophilic datasets, \ie, Chameleon, Squirrel, Actor and Penn94.
Penn94 \citep{Penn94} and arXiv \citep{OGBG} are two large scale datasets. 
Other datasets, provided by \citep{MUSAE, Geom-GCN}, are commonly used to evaluate the performance of GNNs on heterophilic and homophilic graphs.

\vspace{-10pt}
\paragraph{Baselines and settings.} 
We benchmark our model against a series of competitive baselines, including spatial GNNs, spectral GNNs, and graph Transformers.
For all datasets, we use the full-supervised split, \ie, 60\% for training, 20\% for validation, and 20\% for testing, as suggested in \citep{BernNet}.
All methods run 10 times and report the mean accuracy with a 95\% confidence interval.
For polynomial GNNs, we set the order of polynomials $K$ = 10. For other methods, we use a 2-layer module.
To ensure all models have similar numbers of parameters, in six small datasets, we set the hidden size $d=64$ for spatial and spectral GNNs and $d=32$ for graph Transformers and Specformer.
The total numbers of parameters on Photo dataset are shown in Table \ref{tab:node_cla_pfm}.
On two large datasets, we use truncated spectral decomposition to improve the scalability.
Based on the filters learned on the small datasets, we find that band-rejection filters are important for heterophilic datasets and low-pass filters are suitable for homophilic datasets.
See Figures \ref{fig:filter_of_citeseer} and \ref{fig:filter_of_squirrel}.
Therefore, we use eigenvectors with the smallest 3000 (low-frequency) and largest 3000 eigenvalues (high-frequency) for Penn94, and eigenvectors with the smallest 5000 eigenvalues (low-frequency) for arXiv.
We use one layer for Specformer and set $d=64$ in Penn94 and $d=512$ in arXiv for all methods, as suggested by \citep{Penn94, Chebyshev2}.
More details, \eg, optimizers, can be found in Appendix \ref{appendix: experimental details}.

\vspace{-10pt}
\paragraph{Results.}
In Table \ref{tab:node_cla_pfm}, we can find that Specformer outperforms state-of-the-art baselines on 7 out of 8 datasets and achieves 12\% relative improvement on the Squirrel dataset, which validates the superior learning ability of Specformer.
In addition, the improvement is more pronounced on heterophilic datasets than on homophilic datasets.
This is probably caused by the easier fitting of the low-pass filters in homophilic datasets.
The same phenomenon is also observed in the synthetic graphs.
An interesting observation is that the improvement on larger graphs, \eg, Actor and Photo, is less than that on smaller graphs. One possible reason is that the role of the self-attention mechanism is weakened, \ie, the attention values become uniform due to a large number of tokens.
We notice that Specformer has a slightly higher variance than the baselines. This is because we set a large dropout rate to prevent overfitting.
On the two large graph datasets, we can see that graph Transformers are memory-consuming due to the self-attention.
On the contrary, Specformer reduces the time and space costs by using the truncated decomposition and shows better scalability than graph Transformers.
The time and space overheads are listed in Appendix \ref{appendix: overhead}.

\begin{table}[t]
  \centering
  \caption{Results on real-world node classification tasks. Mean accuracy (\%) ± 95\% confidence interval. * means re-implemented baselines. ``OOM'' means out of GPU memory.}
  \resizebox{\linewidth}{!}{
    \begin{tabular}{l|c|cccc|cccc}
    \toprule
          & \multirow{2}[4]{*}{\makecell[c]{\textbf{Param.} \\ \textbf{on Photo}}} & \multicolumn{4}{c|}{\textbf{Heterophilic}} & \multicolumn{4}{c}{\textbf{Homophilic}} \\
\cmidrule{3-10}          &       & \textbf{Chameleon} & \textbf{Squirrel} & \textbf{Actor} & \textbf{Penn94} & \textbf{Cora}  & \textbf{Citeseer} & \textbf{Photo} & \textbf{arXiv} \\
    \midrule
    \multicolumn{10}{c}{Spatial-based GNNs} \\
    \midrule
    GCN         & 48K & 59.61±2.21 & 46.78±0.87 & 33.23±1.16 & 82.47±0.27 & 87.14±1.01 & 79.86±0.67 & 88.26±0.73 & 71.74±0.29 \\
    GAT         & 49K & 63.13±1.93 & 44.49±0.88 & 33.93±2.47 & 81.53±0.55 & 88.03±0.79 & 80.52±0.71 & 90.94±0.68 & 71.82±0.23 \\
    H$_2$GCN    & 60K & 57.11±1.58 & 36.42±1.89 & 35.86±1.03 & OOM & 86.92±1.37 & 77.07±1.64 & 93.02±0.91 & OOM \\
    GCNII       & 49K  & 63.44±0.85 & 41.96±1.02 & 36.89±0.95 & 82.92±0.59 & 88.46±0.82 & 79.97±0.65 & 89.94±0.31 & 72.04±0.19 \\
    \midrule
    \multicolumn{10}{c}{Spectral-based GNNs} \\
    \midrule
    LanczosNet*     & 50K & 64.81±1.56 & 48.64±1.77 & 38.16±0.91 & 81.55±0.26 & 87.77±1.45 & 80.05±1.65 & 93.21±0.85 & 71.46±0.39 \\
    ChebyNet        & 48K & 59.28±1.25 & 40.55±0.42 & 37.61±0.89 & 81.09±0.33 & 86.67±0.82 & 79.11±0.75 & 93.77±0.32 & 71.12±0.22 \\
    GPR-GNN         & 48K & 67.28±1.09 & 50.15±1.92 & 39.92±0.67 & 81.38±0.16 & 88.57±0.69 & 80.12±0.83 & 93.85±0.28 & 71.78±0.18 \\
    BernNet         & 48K & 68.29±1.58 & 51.35±0.73  & 41.79±1.01 & 82.47±0.21 & 88.52±0.95 & 80.09±0.79 & 93.63±0.35 & 71.96±0.27 \\
    ChebNetII       & 48K & 71.37±1.01 & 57.72±0.59 & 41.75±1.07 & 83.12±0.22 & 88.71±0.93 & 80.53±0.79 & 94.92±0.33 & 72.32±0.23 \\
    JacobiConv      & 48K & 74.20±1.03 & 57.38±1.25 & 41.17±0.64 & 83.35±0.11 & \textbf{88.98±0.46} & 80.78±0.79 & 95.43±0.23 & 72.14±0.17 \\
    \midrule
    \multicolumn{10}{c}{Graph Transformers} \\
    \midrule
    Transformer*    & 37K & 46.39±1.97 & 31.90±3.16 & 39.95±1.64 & OOM & 71.83±1.68 & 70.55±1.20 & 90.05±1.50 & OOM \\
    Graphormer*     & 139K & 54.49±3.11 & 36.96±1.75 & 38.45±1.38 & OOM & 67.71±0.78 & 73.30±1.21 & 85.20±4.12 & OOM \\
    \midrule
    \midrule
    Specformer      & 32K & \textbf{74.72±1.29} & \textbf{64.64±0.81} & \textbf{41.93±1.04} & \textbf{84.32±0.32} & 88.57±1.01 & \textbf{81.49±0.94} & \textbf{95.48±0.32} & \textbf{72.37±0.18} \\
    \bottomrule
    \end{tabular}}%
  \label{tab:node_cla_pfm}%
  \vspace{-10pt}
\end{table}%

\subsection{Graph Classification and Regression}

\paragraph{Datasets.} We conduct experiments on three graph-level datasets with different scales.
ZINC \citep{ZINC} is a small subset of a large molecular dataset, which contains 12K graphs in total.
MolHIV and MolPCBA are taken from the Open Graph Benchmark (OGB) datasets \citep{OGBG}. MolHIV is a medium dataset that has nearly 41K graphs.
MolPCBA is the largest, containing 437K graphs.
For all datasets, nodes represent the atoms, and edges indicate the bonds.

\vspace{-5pt}
\paragraph{Baselines and Settings.} We choose popular MPNNs (GCN, GIN, and GatedGNN), graph Transformers with positional or structural embedding (SAN, Graphormer, and GPS), and other state-of-the-art GNNs (CIN, GIN-AK+, etc.) as the baselines of graph-level tasks. 
For the ZINC dataset, we tune the hyperparameters of Specformer to ensure that the total parameters are around 500K.

\begin{table}[t]
    \centering
    \caption{Results on graph-level datasets. $\downarrow$ means lower the better, and $\uparrow$ means higher the better.}
    \resizebox{0.65\linewidth}{!}{
    \begin{tabular}{lccc}
    \toprule
    \textbf{Model} & \textbf{ZINC($\downarrow$)} & \textbf{MolHIV($\uparrow$)} & \textbf{MolPCBA($\uparrow$)} \\
    \midrule
    GCN         & 0.367 ± 0.011 & 0.7599 ± 0.0119 & 0.2424 ± 0.0034\\
    GIN         & 0.526 ± 0.051 & 0.7707 ± 0.0149 & 0.2703 ± 0.0023\\
    GatedGCN    & 0.090 ± 0.001 & - & 0.267 ± 0.002\\
    CIN         & 0.079 ± 0.006 & \textbf{0.8094 ± 0.0057} & -\\
    GIN-AK+     & 0.080 ± 0.001 & 0.7961 ± 0.0119 & 0.2930 ± 0.0044\\
    GSN         & 0.101 ± 0.010 & 0.7799 ± 0.0100 & - \\
    DGN         & 0.168 ± 0.003 & 0.7970 ± 0.0097 & 0.2885 ± 0.0030\\
    PNA         & 0.188 ± 0.004 & 0.7905 ± 0.0132 & 0.2838 ± 0.0035\\
    Spec-GN     & 0.070 ± 0.002 & - & 0.2965 ± 0.0028\\
    \midrule
    SAN         & 0.139 ± 0.006 & 0.7785 ± 0.0025 & 0.2765 ± 0.0042\\
    Graphormer\tablefootnote{We retrain Graphomer on MolHIV and MolPCBA datasets without using pre-training and augmentation.}  & 0.122 ± 0.006 & 0.7640 ± 0.0022 & 0.2643 ± 0.0017 \\
    GPS         & 0.070 ± 0.004 & 0.7880 ± 0.0101 & 0.2907 ± 0.0028\\
    \midrule
    Specformer  & \textbf{0.066 ± 0.003} & 0.7889 ± 0.0124 & \textbf{0.2972 ± 0.0023} \\
    \bottomrule
    \end{tabular}}
    \label{tab:graph-level}%
\end{table}

\vspace{-5pt}
\paragraph{Results.}
We apply Specformer-Small, Medium, and Large for ZINC, MolHIV, and MolPCBA, respectively. The results are shown in Table \ref{tab:graph-level}.
It can be seen that Specformer outperforms the state-of-the-art models in ZINC and MolPCBA datasets, without using any hand-crafted features or pre-defined polynomials.
This phenomenon proves that directly using neural networks to learn the graph spectrum is a promising way to construct powerful GNNs.

\subsection{Ablation Studies}

\begin{table}[t]
    \centering
    \caption{Ablation studies on node-level and graph-level tasks.}
    \resizebox{0.8\linewidth}{!}{
    \begin{tabular}{cccccccc}
    \toprule
    \multicolumn{2}{c}{\textbf{Encoder}} & \multicolumn{3}{c}{\textbf{Decoder}} & \multicolumn{2}{c}{\textbf{Node-level}} & \multicolumn{1}{c}{\textbf{Graph-level}} \\
    \cmidrule(lr){1-2} \cmidrule(lr){3-5} \cmidrule(lr){6-7} \cmidrule(lr){8-8}
    $\mathbf{\rho(\lambda)}$ & \textbf{Attention} & \textbf{Small} & \textbf{Medium} & \textbf{Large} & \textbf{Squirrel ($\uparrow$)} & \textbf{Citeseer ($\uparrow$)} & \textbf{MolPCBA ($\uparrow$)}\\
    \midrule
      &   &   & \Checkmark &   & 33.05 & 80.57 & 0.2696 \\
    \Checkmark &   &   & \Checkmark &   & 63.78 & 81.17 & 0.2933 \\
    \Checkmark & \Checkmark &   & \Checkmark &   & 64.64 & 81.49 & 0.2970 \\ 
    \Checkmark & \Checkmark & \Checkmark &   &   & 64.51 & 81.47 & 0.2912 \\ 
    \Checkmark & \Checkmark &   &   & \Checkmark & 65.10 & 80.00 & 0.2972 \\
    \bottomrule
    \end{tabular}}
    \label{tab:ablation}%
\end{table}

\begin{figure}[t]
    \centering
    \subfigure[Low-pass]{
        \includegraphics[width=0.18\linewidth]{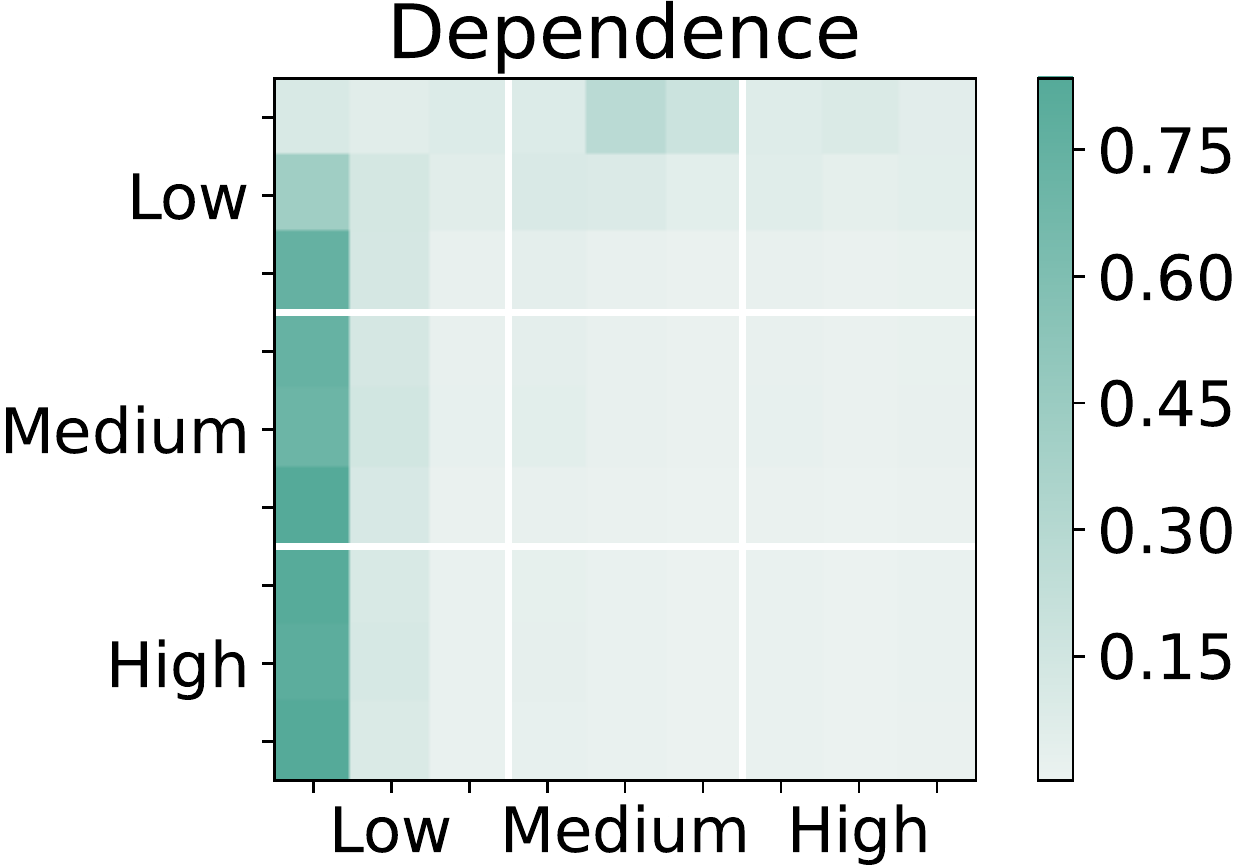}
    }
    \subfigure[High-pass]{
        \includegraphics[width=0.18\linewidth]{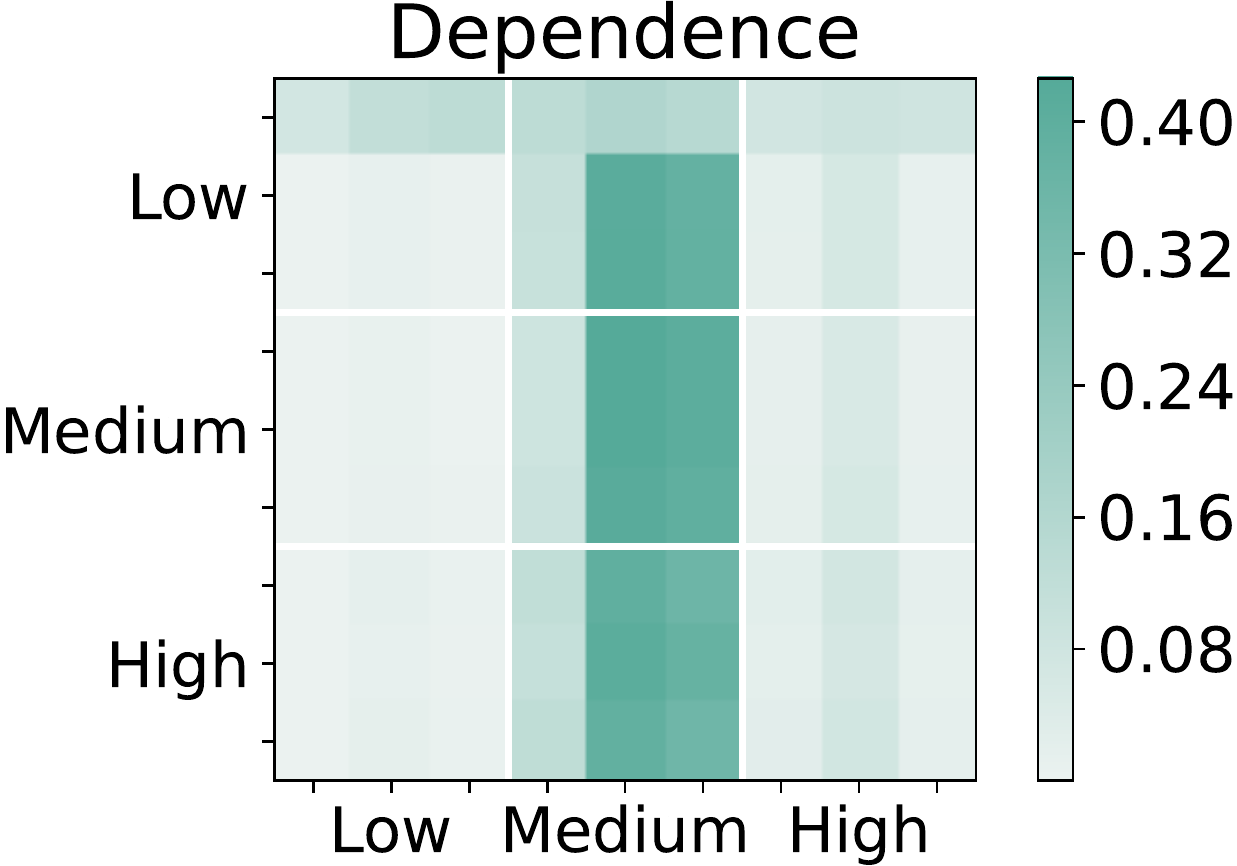}
    }
    \subfigure[Band-pass]{
        \includegraphics[width=0.18\linewidth]{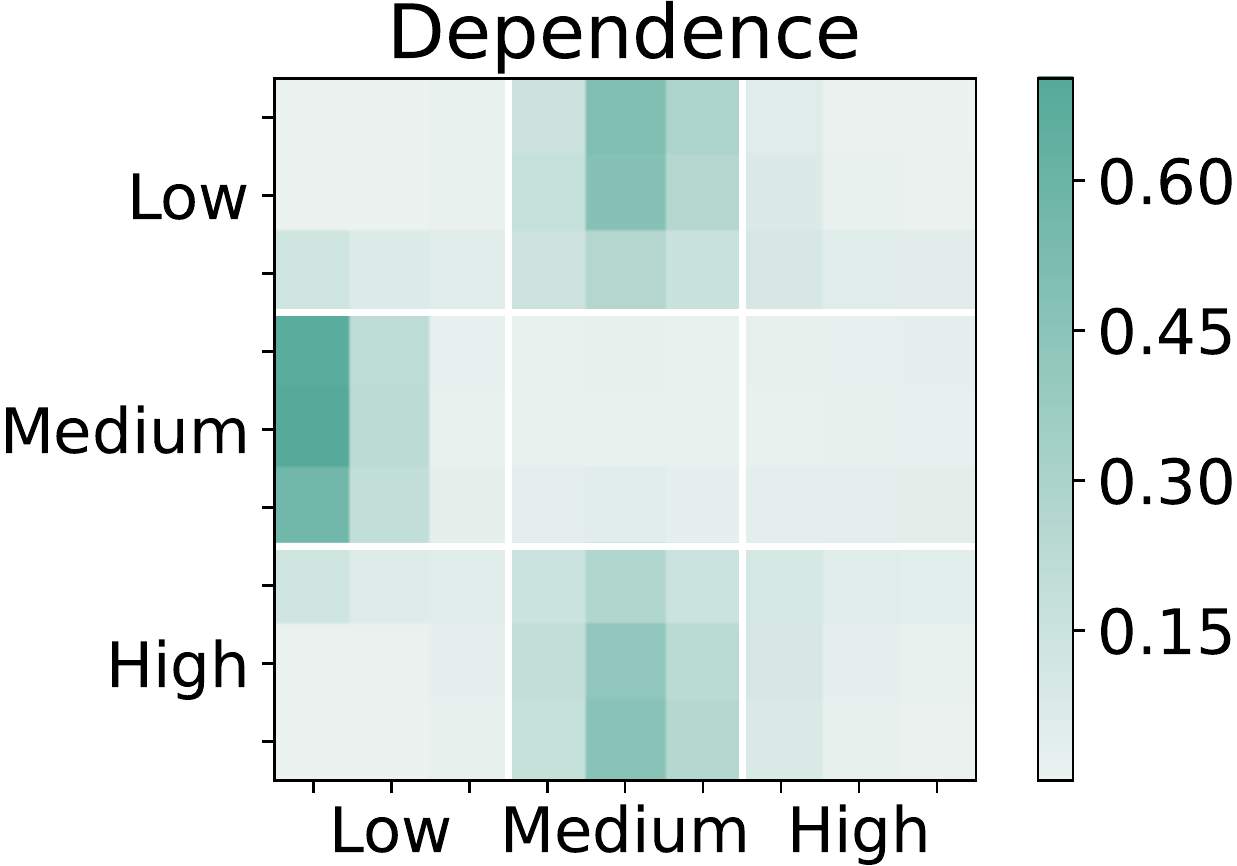}
    }
    \subfigure[Band-rejection]{
        \includegraphics[width=0.18\linewidth]{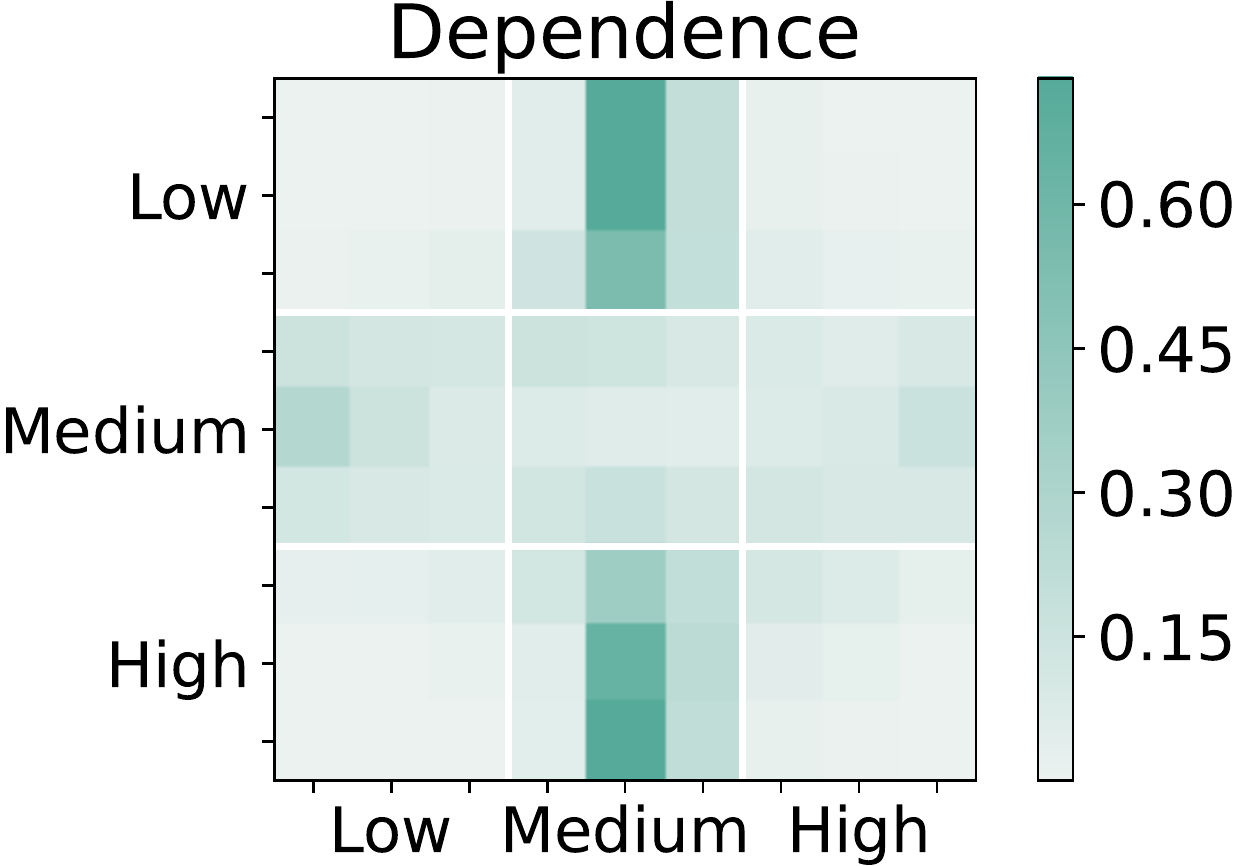}
    }
    \subfigure[Comb]{
        \includegraphics[width=0.18\linewidth]{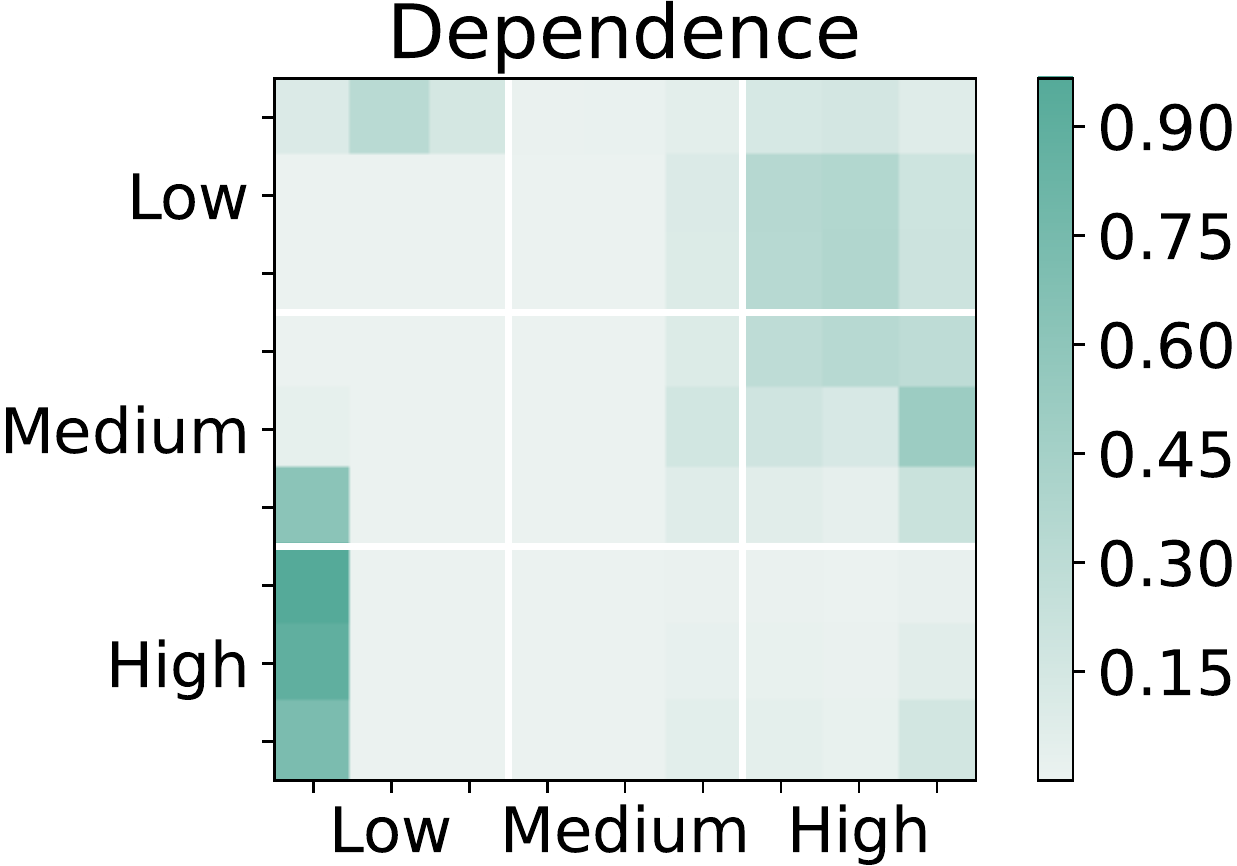}
    }
    \caption{The dependency of eigenvalues on synthetic graphs.}
    \label{fig:dependece_synthetic}
\end{figure}

\begin{figure}[t]
    \centering
    \subfigure[Dependency of Citeseer]{
        \includegraphics[width=0.23\linewidth]{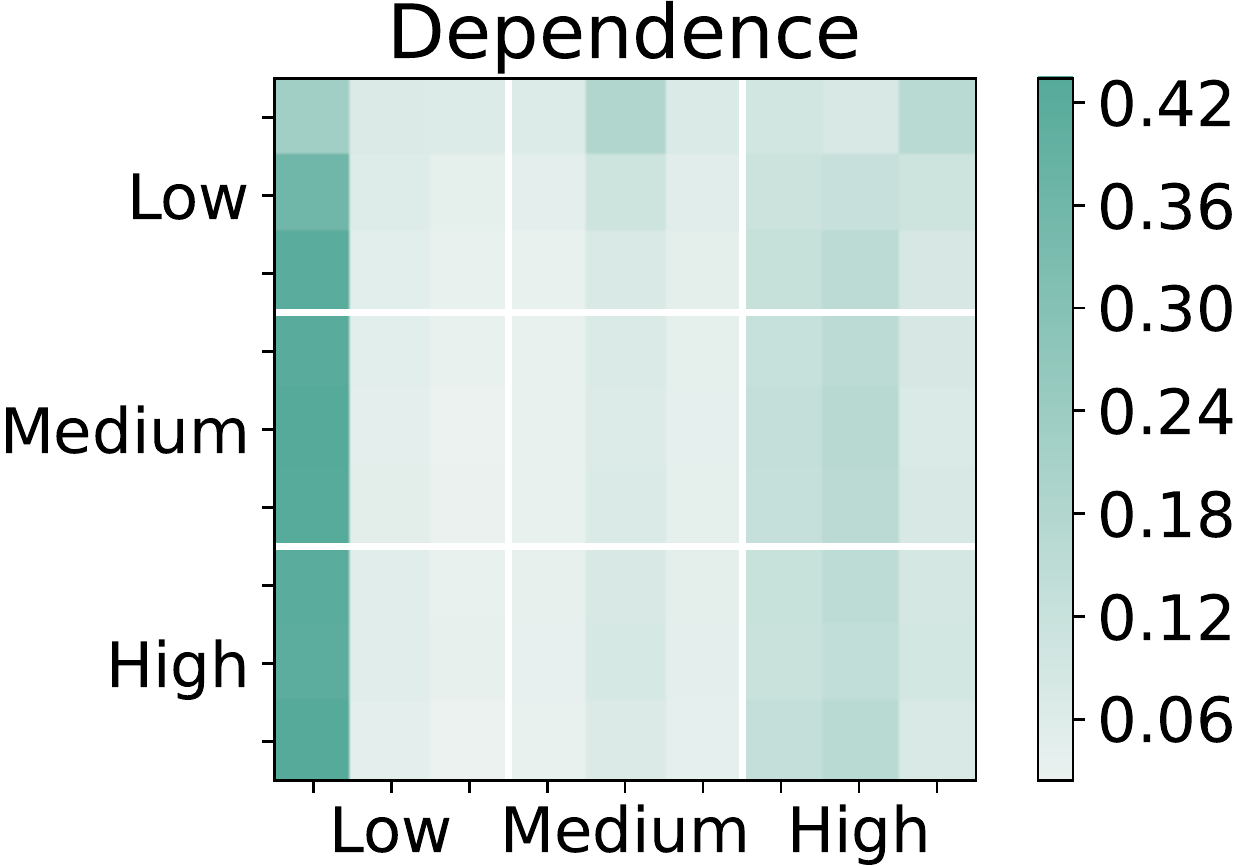}
    }
    \subfigure[Filter of Citeseer]{
        \label{fig:filter_of_citeseer}
        \includegraphics[width=0.23\linewidth]{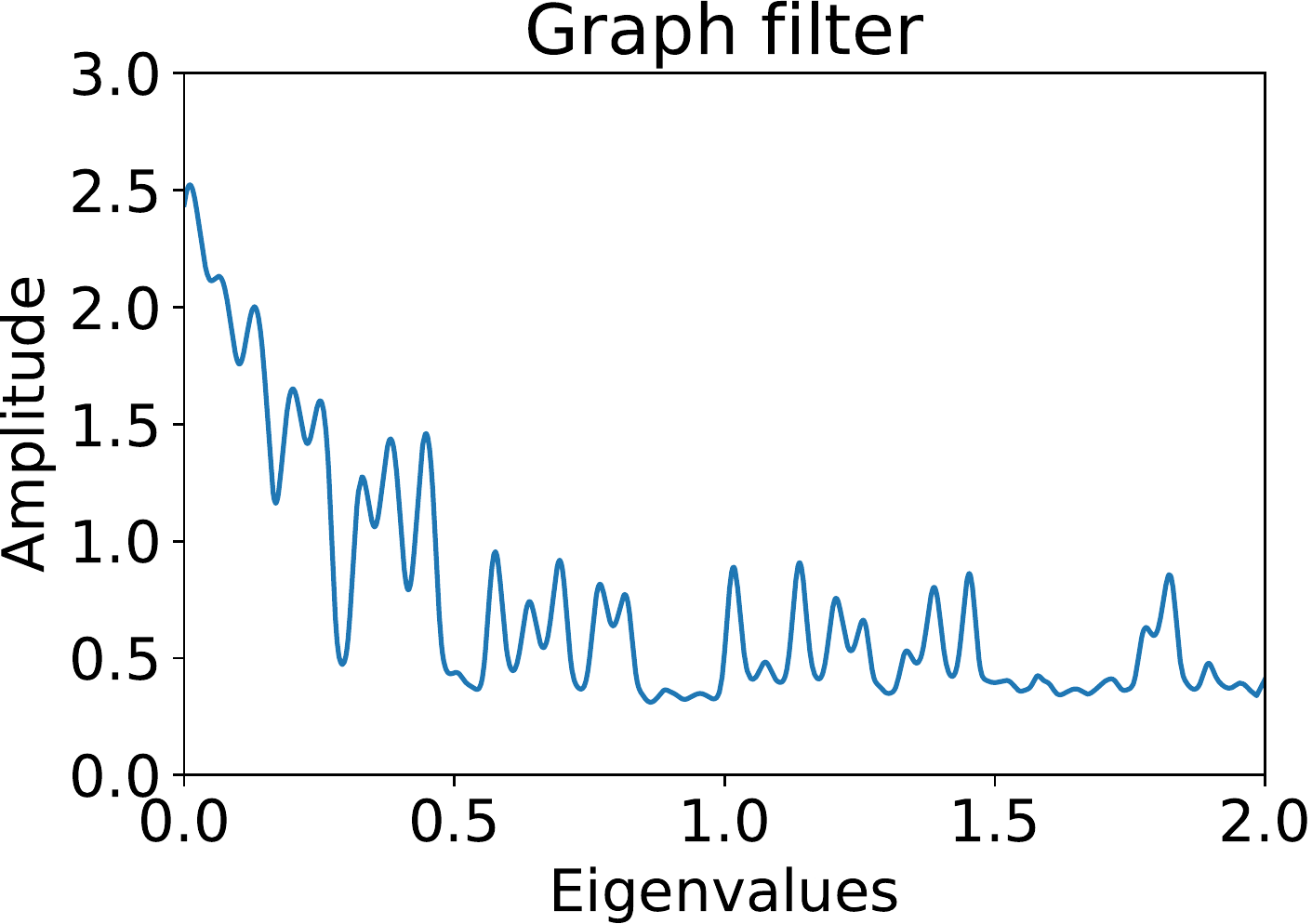}
    }
    \subfigure[Dependency of Squirrel]{
        \includegraphics[width=0.23\linewidth]{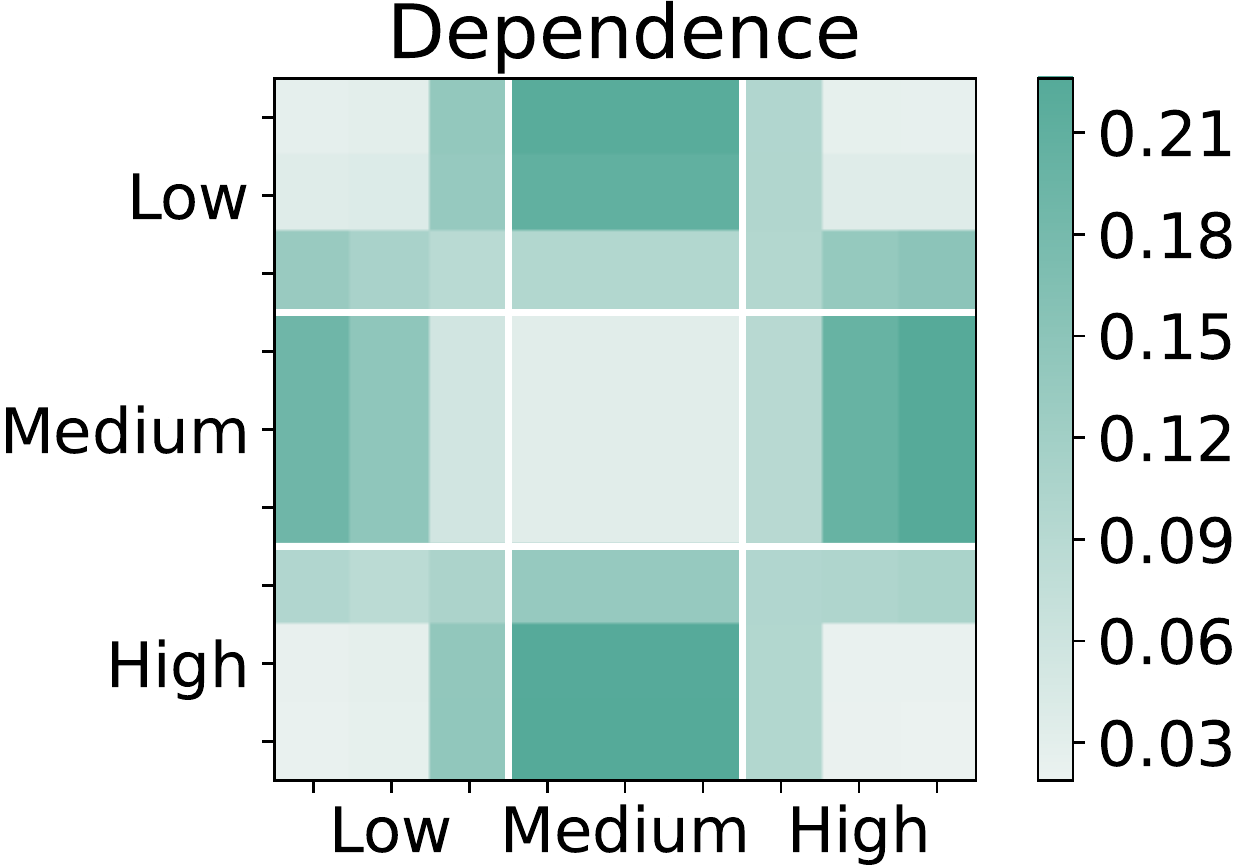}
    }
    \subfigure[Filter of Squirrel]{
        \label{fig:filter_of_squirrel}
        \includegraphics[width=0.23\linewidth]{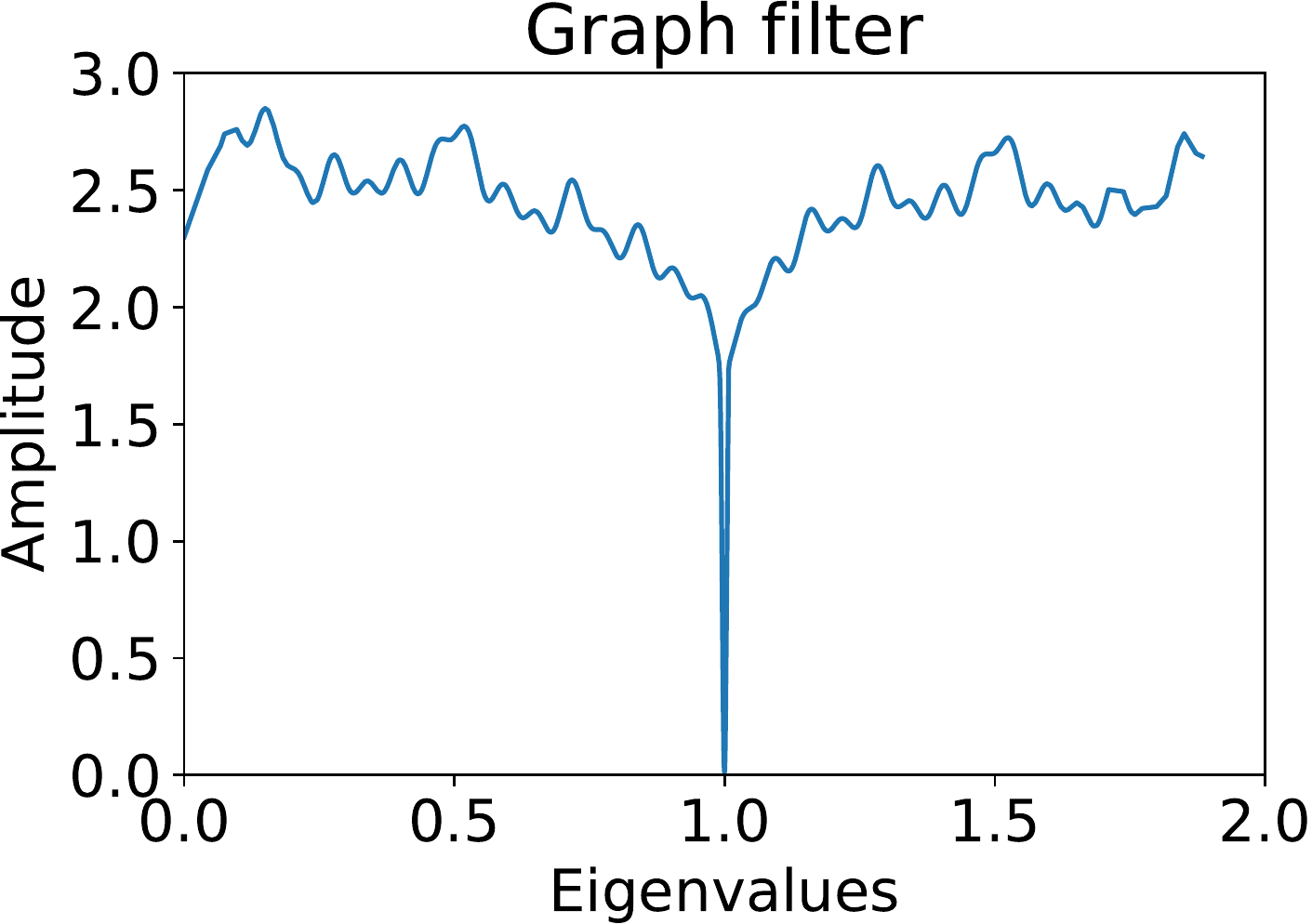}
    }
    \caption{The dependency and learned filters of heterophilic and homophilic datasets.}
    \label{fig:dependece_node}
\end{figure}

We perform ablation studies on two node-level datasets and one graph-level dataset to evaluate the effectiveness of each component. The results are shown in Table \ref{tab:ablation}.
The top three lines show the effect of the encoder, \ie, eigenvalue encoding (EE) and self-attention. It can be seen that EE is more important on Squirrel than on Citeseer. The reason is that the spectral filter of Squirrel is more difficult to learn. Therefore, the model needs the encoding to learn better representations. The attention module consistently improves performance by capturing the dependency among eigenvalues.

The bottom three lines verify the performance of graph filters at different scales.
We can see that in the easy task, \eg, Citeseer, the Small and Medium models have similar performance, but the Large model causes serious overfitting.
In the hard task, \eg, Squirrel and MolPCBA, the Large model is slightly better than the Medium model but outperforms the Small model a lot, implying that adding the number of parameters can boost the performance.
In summary, it is important to consider the difficulty of tasks when selecting models.

\begin{figure}[H]
    \centering
    \subfigure[dependency of $\hat{\mS}_{1}$]{
        \includegraphics[width=0.23\linewidth]{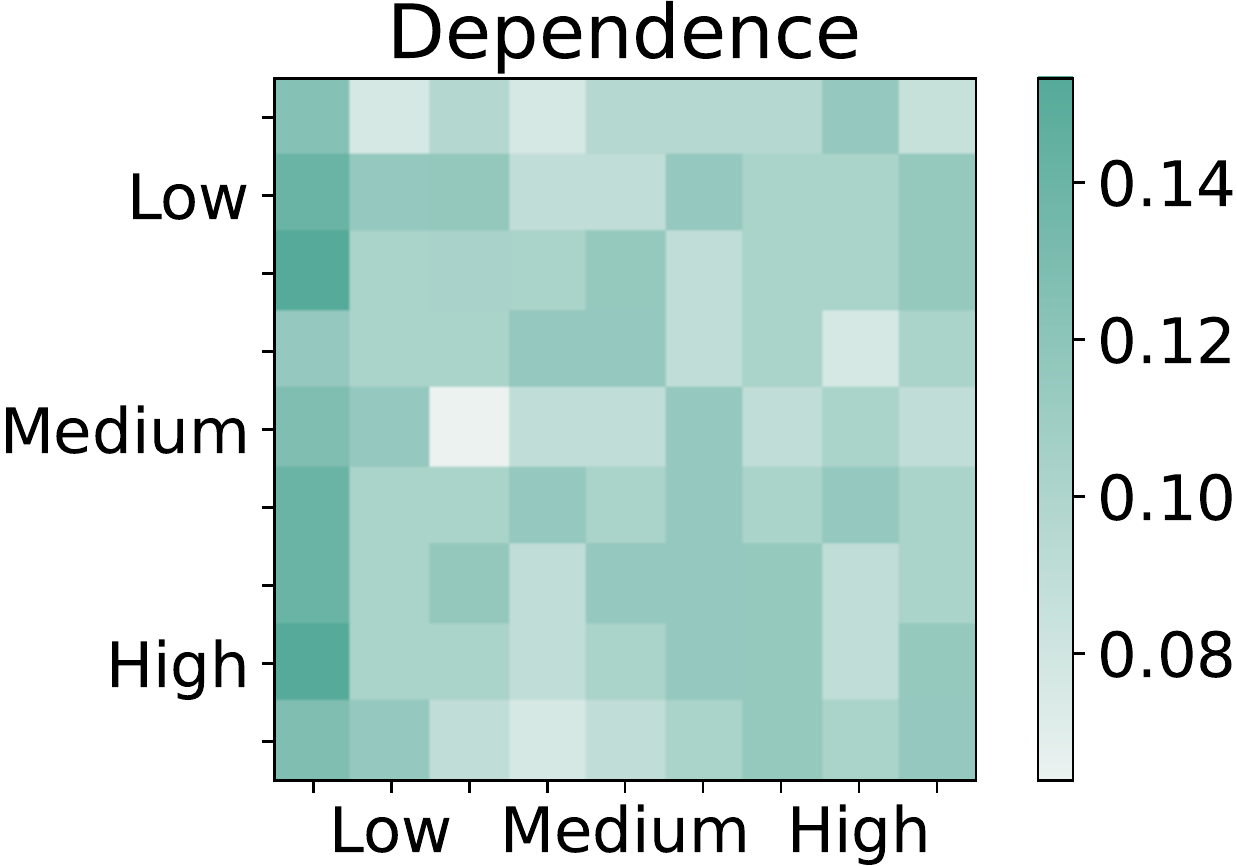}
    }
    \subfigure[Filter of $\hat{\mS}_{1}$]{
        \includegraphics[width=0.23\linewidth]{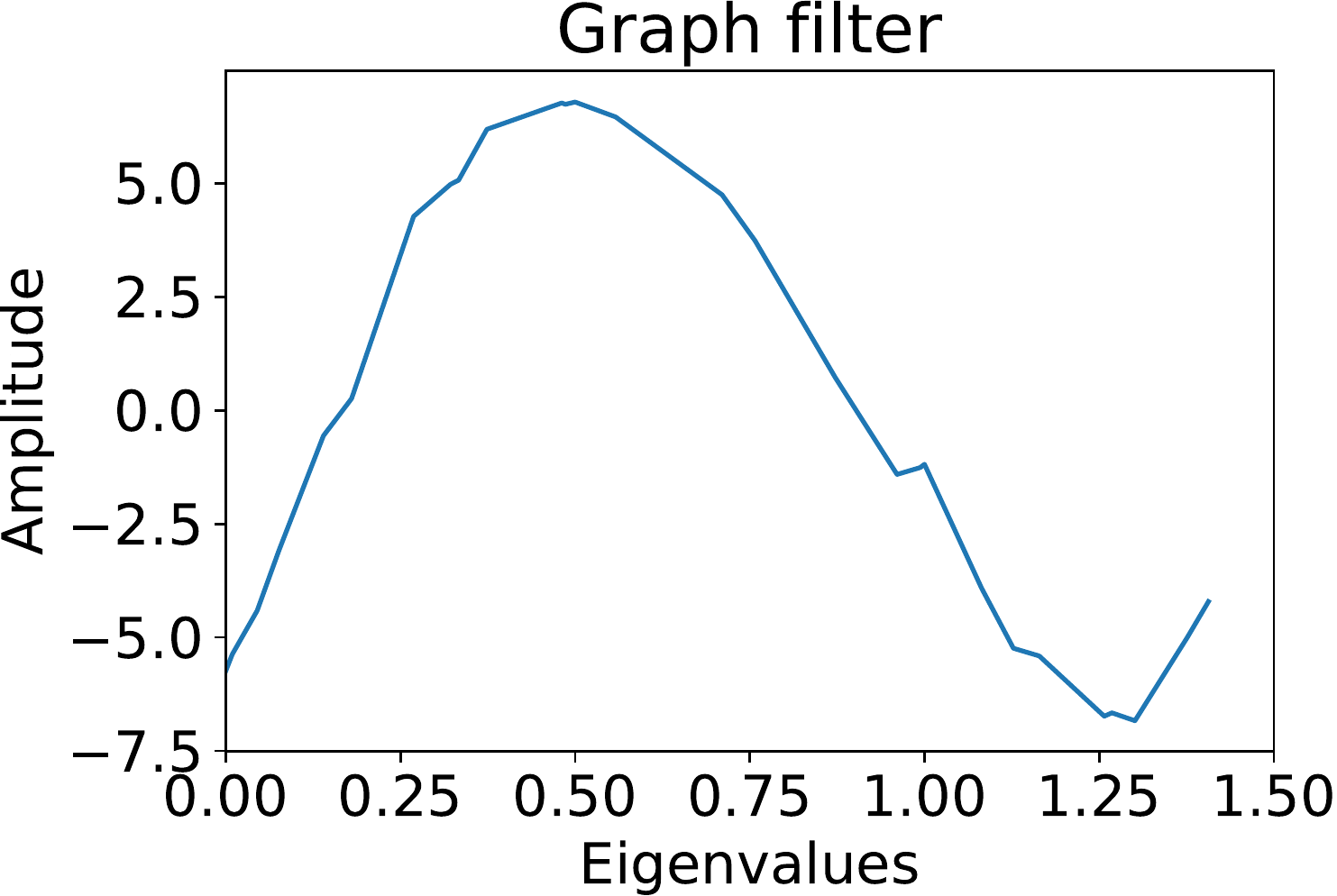}
    }
    \subfigure[dependency of $\hat{\mS}_{2}$]{
        \includegraphics[width=0.23\linewidth]{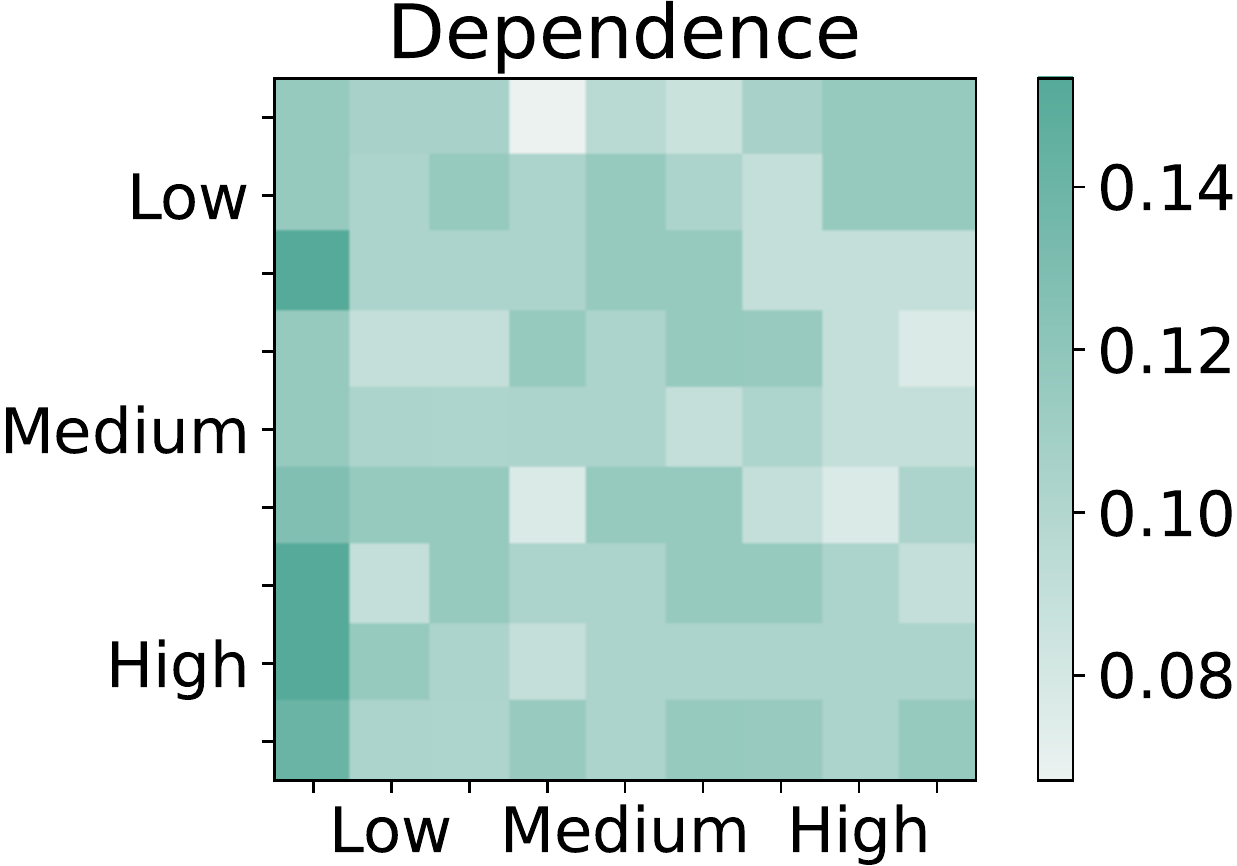}
    }
    \subfigure[Filter of $\hat{\mS}_{2}$]{
        \includegraphics[width=0.23\linewidth]{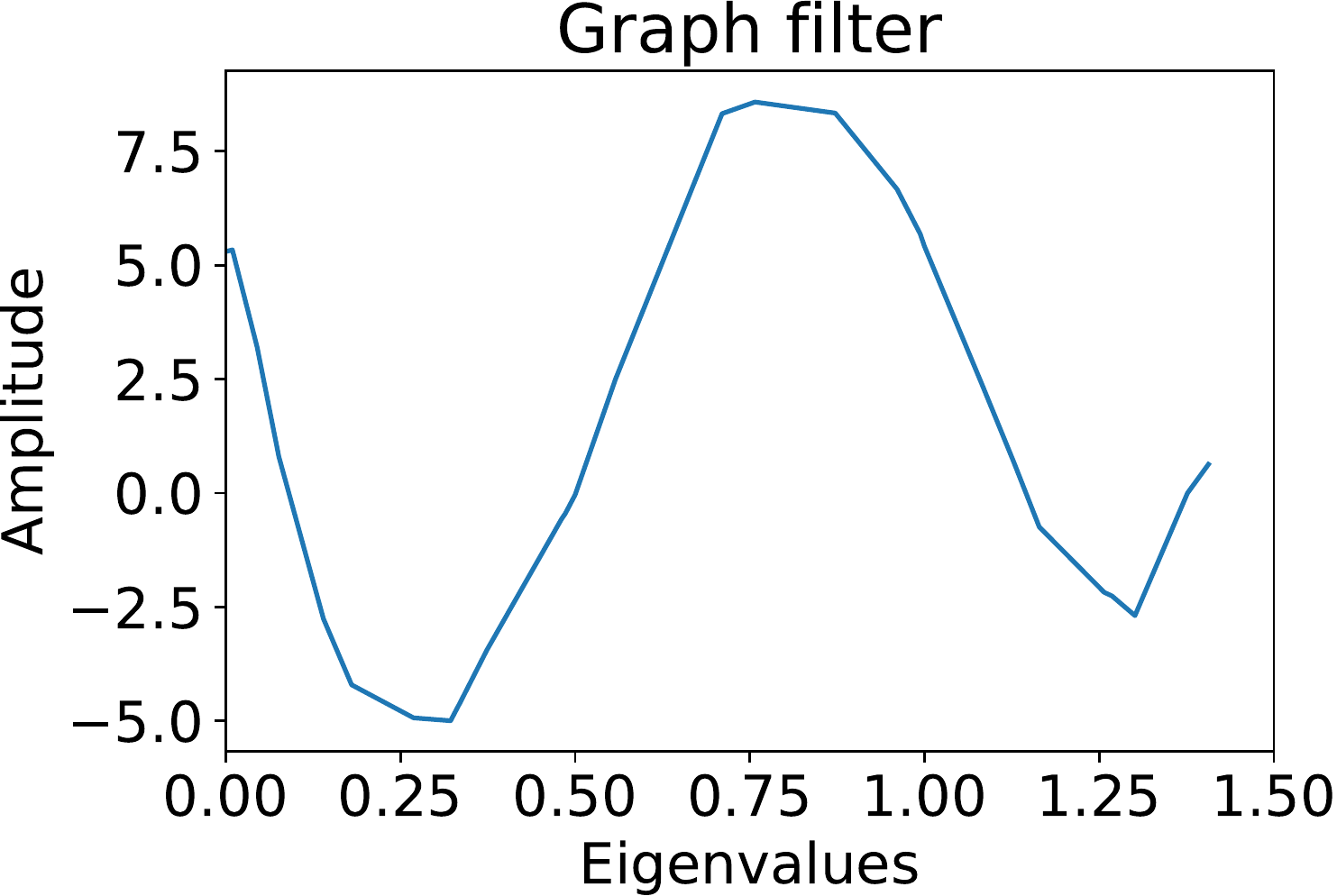}
    }
    \caption{The dependency and basic filters of ZINC dataset.}
    \label{fig:dependency_zinc}
\end{figure}

\subsection{Visualizations}
We now investigate the dependency of eigenvalues and filters learned by Specformer. 
To make the dependency clearer, we first quantize the self-attention weight matrix by grouping the eigenvalues into three frequency bands: Low $\in [0, \frac{2}{3})$, Medium $\in [\frac{2}{3}, \frac{4}{3})$, and High $\in [\frac{4}{3}, 2]$.
We then compute the quantized dependency.
More details are given in Appendix \ref{appendix: condense}.

The results are shown in Figures \ref{fig:dependece_synthetic}, \ref{fig:dependece_node}, and \ref{fig:dependency_zinc}, from which we have some interesting observations.
(1) Similar dependency patterns can be learned on different graphs.
In low-pass filtering, \eg, Citeseer and Low-pass, all frequency bands tend to use the low-frequency information.
While, in the band-related filtering, \eg, Squirrel, Band-pass, and Band-rejection, the low- and high-frequency highly depend on the medium-frequency, and the situation of the medium-frequency is opposite.
(2) The more difficult the task, the less obvious the dependency.
On Comb and ZINC, the dependency of eigenvalues is inconspicuous.
(3) On graph-level datasets, the decoder can learn different filters.
Figure \ref{fig:dependency_zinc} shows two basic filters. It can be seen that $\hat{\mS}_{1}$ and $\hat{\mS}_{2}$ have different dependencies and patterns, which are different from node-level tasks, where only one filter is needed.
The finding suggests that these graph-level tasks are more difficult than node-level tasks and are still challenging for spectral GNNs.

\section{Conclusion}
\vspace{-5pt}
In this paper, we propose Specformer that leverages Transformer to build a set-to-set spectral filter along with learnable bases.
Specformer effectively captures magnitudes and relative dependencies of the eigenvalues in a permutation-equivariant fashion and can perform non-local graph convolution.
Experiments on synthetic and real-world datasets demonstrate that Specformer outperforms various GNNs and learns meaningful spectrum patterns.
A promising future direction is to improve the efficiency of Specformer through sparsifying the self-attention matrix of Transformer.

\clearpage


\subsubsection*{Acknowledgments}
This work is supported in part by the National Natural Science Foundation of China (No. U20B2045, 62192784, 62172052, 62002029, 62172052, U1936014),  BUPT Excellent Ph.D. Students Foundation (No. CX2022310), the NSERC Discovery Grants (No. RGPIN-2019-05448, No. RGPIN-2022-04636), and the NSERC Collaborative Research and Development Grant (No. CRDPJ 543676-19).
Resources used in preparing this research were provided, in part, by Advanced Research Computing at the University of British Columbia, the Oracle for Research program, and Compute Canada.

\bibliography{iclr2023_conference}
\bibliographystyle{iclr2023_conference}

\clearpage
\appendix
\section{Experimental Details}
\label{appendix: experimental details}

\subsection{Datasets}
\begin{table}[H]
  \centering
  \caption{Detailed information of node-level datasets.}
    \begin{tabular}{lrrrrr}
    \toprule
          & \multicolumn{1}{c}{\textbf{Graphs}} & \multicolumn{1}{c}{\textbf{Nodes}} & \multicolumn{1}{c}{\textbf{Edges}} & \textbf{Features} & \textbf{Classes} \\
    \midrule
    Chameleon   & 1 & 2,277 & 36,101  & 2,325 & 5 \\
    Squirrel    & 1 & 5,201 & 217,073 & 2,089 & 5 \\
    Actor       & 1 & 7,600 & 33,544  & 932   & 5 \\
    Cora        & 1 & 2,708 & 5,429   & 1,433 & 7 \\
    Citeseer    & 1 & 3,327 & 4,732   & 3,703 & 6 \\
    Photo       & 1 & 7,650 & 110,081 & 745   & 8 \\
    Penn94      & 1 & 41,554 & 1,362,229 & 4,814 & 2 \\
    arXiv       & 1 & 169,343 & 1,116,243 & 128 & 40 \\
    \bottomrule
    \end{tabular}
\end{table}%

\begin{table}[H]
  \centering
  \caption{Detailed information of graph-level datasets.}
    \resizebox{\linewidth}{!}{
    \begin{tabular}{lrrrrrcc}
    \toprule
          & \multicolumn{1}{c}{\textbf{\# Graphs}} & \multicolumn{1}{c}{\textbf{Avg. \# nodes}} & \multicolumn{1}{c}{\textbf{Avg. \# edges}} & \textbf{Min \# nodes} & \textbf{Max \# nodes} & \textbf{Tasks} & \textbf{Metric} \\
    \midrule
    ZINC     & 12,000 & 23.2  & 24.9 & 9 & 37 & Regression & MAE \\
    MolHIV   & 41,127 & 25.5  & 27.5 & 2 & 222 & Classification & AUROC \\
    MolPCBA  & 437,929 & 26.0 & 28.1 & 1 & 332 & Classification & AP \\
    PCQM4Mv2 & 3,746,620 & 14.1 & 14.6 & 1 & 20 & Regression & MAE \\
    \bottomrule
    \end{tabular}}
\end{table}%

\subsection{Detailed experimental setup}
\label{appendix: setup}

\paragraph{Data splitting.} In the node classification task, \cite{Penn94} provides five official splits for Penn94 dataset, \cite{OGBG} provides one time-based split for arXiv dataset. Therefore, we run the Penn94 dataset five times, each with a different split. And we run the arXiv dataset ten times, each with the same split and a different initialization. For other datasets, we run the experiments ten times, each with a different split and initialization because there is no official splitting.
In the graph-level tasks, we use the official splitting provided by \cite{OGBG} and run the experiments ten times, each with a different initialization.

\paragraph{Optimizer.} For the node classification task, we use the Adam \citep{Adam} optimizer, as suggested by \cite{BernNet, Jacobi}. For graph-level tasks, we use the AdamW \citep{AdamW} optimizer, with the default parameters of $\epsilon=$1e-8 and $\left( \beta_{1}, \beta_{2} \right)=(0.99, 0.999)$, as suggested by \cite{Graphormer, GPS}. Besides, we also use a learning rate scheduler for graph-level tasks, which is first a linear warm-up stage followed by a cosine decay.

\paragraph{Model selection.} In the node classification task, we run the experiments with 2000 epochs and stop the training in advance if the validation loss does not continuously decrease for 200 epochs. In the graph-level tasks, we run the experiments without early stop. Then we choose the model checkpoint with the lowest validation loss for evaluation.

\paragraph{Environment.} The environment in which we run experiments is:
\begin{itemize}
    \item Operating system: Linux version 3.10.0-693.el7.x86\_64
    \item CPU information: Intel(R) Xeon(R) Silver 4210 CPU @ 2.20GHz
    \item GeForce RTX 3090 (24GB)
\end{itemize}

\paragraph{Hyperparameters.} The hyperparameters of specformer can be seen in Tables \ref{tab:hyper-node} and \ref{tab:hyper-graph}.

\begin{table}[H]
  \centering
  \caption{Hyperparameters of node-level datasets.}
    \resizebox{\linewidth}{!}{
    \begin{tabular}{lccccccccc}
    \toprule
    Hyperparameter & \textbf{Synthetic} & \textbf{Chameleon} & \textbf{Squirrel} & \textbf{Actor} & \textbf{Cora} & \textbf{Citeseer} & \textbf{Photo} & \textbf{Penn94} & \textbf{arXiv} \\
    \midrule
    Layer & 1     & 2     & 2     & 2     & 2     & 2     & 2 & 1 & 1\\
    Heads & 1     & 4     & 2     & 1     & 2     & 2     & 4 & 1 & 1\\
    Hidden dim & 16    & 32    & 32    & 32    & 32    & 32 & 32 & 64 & 512\\
    \midrule
    Epoch               & 2000  & 2000  & 2000  & 2000  & 2000  & 2000  & 2000 & 2000 & 2000 \\
    Learning rate       & 0.01  & 1e-3  & 1e-3  & 2e-4  & 2e-4  & 2e-4 & 2e-4 & 1e-3 & 1e-3\\
    Weight decay        & 0     & 5e-4  & 1e-3  & 1e-4  & 1e-4  & 1e-3 & 1e-4 & 1e-3 & 0\\
    Transformer dropout & 0     & 0.2   & 0.1   & 0.5   & 0.2   & 0     & 0.2 & 0.0 & 0.1\\
    Feature dropout     & 0     & 0.4   & 0.4   & 0.8   & 0.6   & 0.7   & 0.3 & 0.4 & 0.1\\
    Propagation dropout & 0     & 0.5   & 0.4   & 0.5   & 0.2   & 0.5   & 0.2 & 0.4 & 0.1\\
    \midrule
    Combination         & Small & Medium & Medium & Medium & Medium & Medium & Medium & Medium & Medium \\
    Parameters          & 2,084 & 82,445 & 74,787 & 37,680 & 53,885 & 126,840 & 32,044 & 338,179 & 1,931,305\\
    \bottomrule
    \end{tabular}}
    \label{tab:hyper-node}
\end{table}%

\begin{table}[H]
  \centering
  \caption{Hyperparameters of graph-level datasets.}
    \begin{tabular}{lcccc}
    \toprule
    Hyperparameter & ZINC  & MolHIV & MolPCBA & PCQM4Mv2\\
    \midrule
    Layer           & 4     & 8     & 8  & 8\\
    Heads           & 8     & 4     & 8  & 8\\
    Hidden dim      & 160   & 80    & 256 & 320\\
    Graph pooling   & mean  & mean  & mean & mean\\
    \midrule
    Epoch           & 1000  & 50    & 30 & 150 \\
    Warmup          & 50    & 5     & 5  & 10\\
    Batch size      & 32    & 64    & 64 & 256\\
    Learning rate   & 1e-3 & 1e-4 & 5e-4 & 1e-4\\
    Weight decay    & 5e-4 & 1e-4 & 5e-3 & 0\\
    Transformer dropout     & 0.1   & 0.1  & 0.3 & 0.1 \\
    Feature dropout         & 0.05  & 0.1  & 0.1 & 0.1\\
    Propagation dropout     & 0     & 0.3  & 0.1 & 0.1\\
    \midrule
    Combination     & Small & Medium & Large & Medium\\
    Parameter       & 529,609 & 226,645 & 3,025,048 & 4,117,129 \\
    \bottomrule
    \end{tabular}%
    \label{tab:hyper-graph}
\end{table}%

\section{Implementation Details}

\subsection{Specformer Layer}
\label{appendix: layer}

Here we explain how to incorporate the Specformer layer with edge features. Specifically, we first broadcast the node features to the edges and filter the mixed edge features. Finally, the filtered edge features are aggregated to yield new node features.
\begin{equation}
\begin{split}
    \mE &= \mH\text{.unsqueeze(0)} + \mE, \\
    \hat{\mE} &= \mS \odot \mE, \\
    \hat{\mH} &= \hat{\mE}\text{.sum(0)},
\end{split}
\end{equation}
where $\mH \in \R^{N \times d}$ is the node feature matrix and $\mE \in \R^{N \times N \times d}$ is the edge feature matrix.

\subsection{Condensation of self-attention}
\label{appendix: condense}

In this section, we explain the details of the condensation of self-attention. We use $\mB$ and $\hat{\mB}$ to represent the self-attention matrix and its condensation.
\begin{equation}
\begin{split}
    \mB &= \text{Softmax} (\frac{\mQ \mK^{\top}}{\sqrt{d_{q}}})\mV, \\
    \hat{\mB} &= \text{Condense}(\mB).
\end{split}
\end{equation}
Since $\mB$ is row-normalized, we hope the condensation $\hat{\mB}$ is still approximately row-normalized.
For this purpose, we first sum the columns of $\mB$ through the pre-defined frequency bands, e.g., Low, Medium, and High,
and then average the rows.
\begin{equation}
    \hat{\mB}_{i, j} = \sum_{\lambda_{p} \in f_{i}} \sum_{\lambda_{q} \in f_{j}} \mB_{p, q} \Bigg{/} | \mathbf{1}_{\lambda \in f_{i}} |,
\end{equation}
where $f_{i}$ and $f_{j}$ are the frequency bands, and $| \mathbf{1}_{\lambda \in f_{i}} |$ indicates the number of eigenvalues belonging to the frequency band $f_{i}$.

Through this condensation strategy, we can approximately preserve the self-attention matrix's information and find the frequency bands' dependency patterns.

\section{More experimental results}

\subsection{Eigenvalue encoding}
\label{appendix: EE}

\begin{figure}[H]
    \centering
    \subfigure[$\epsilon=1$]{
        \includegraphics[width=0.31\linewidth]{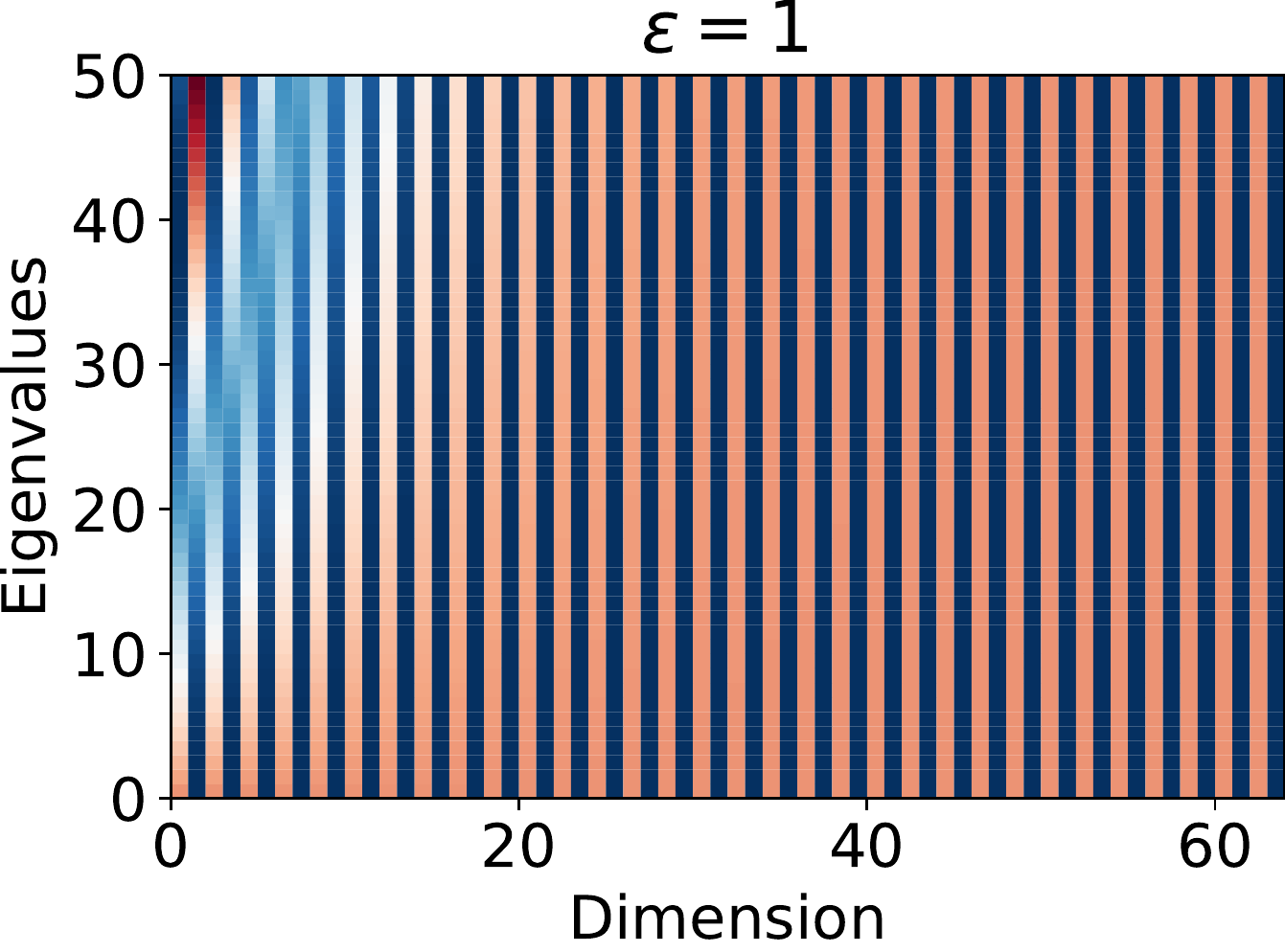}
    }
    \subfigure[$\epsilon=10$]{
        \includegraphics[width=0.31\linewidth]{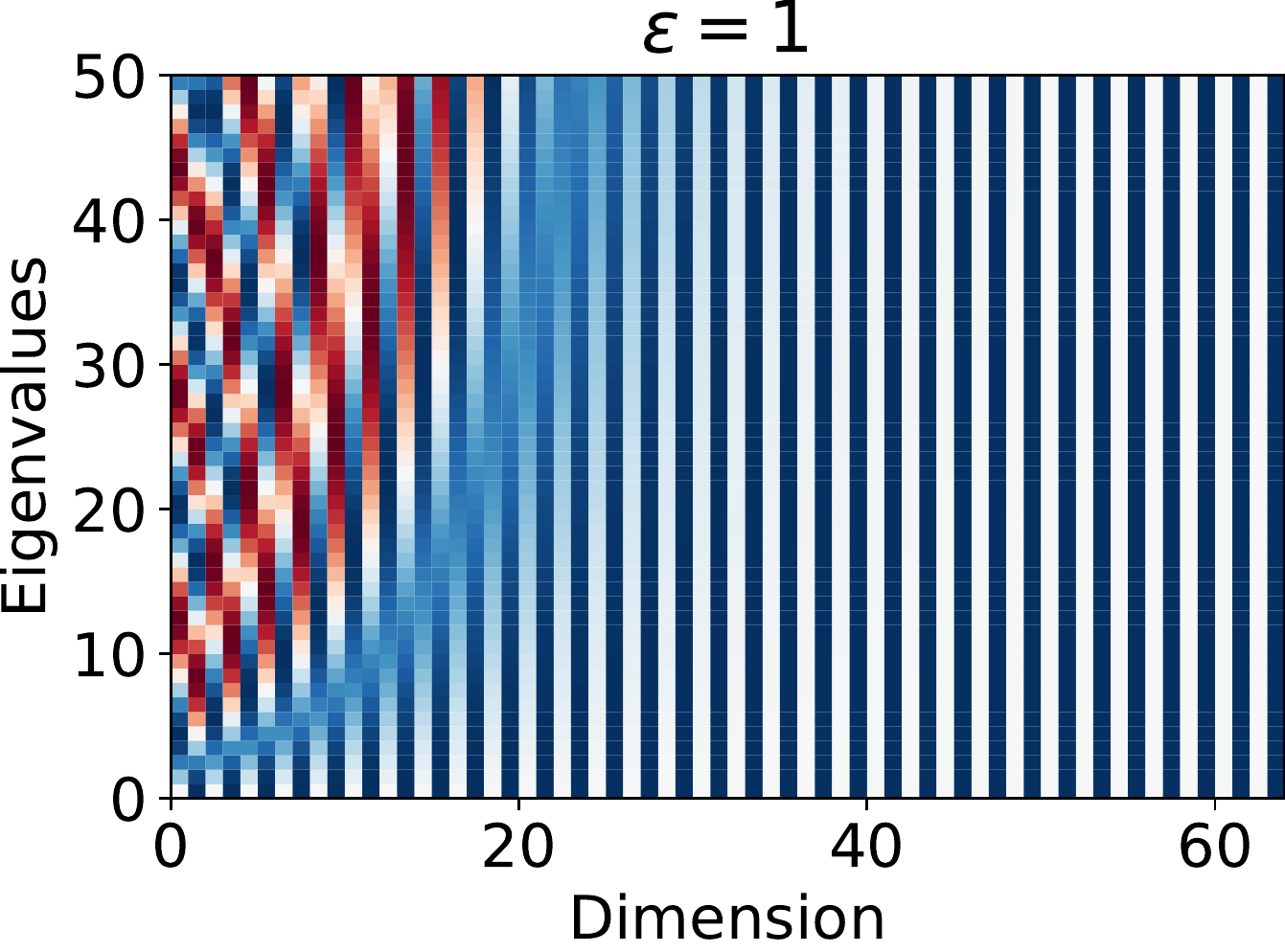}
    }
    \subfigure[$\epsilon=100$]{
        \includegraphics[width=0.31\linewidth]{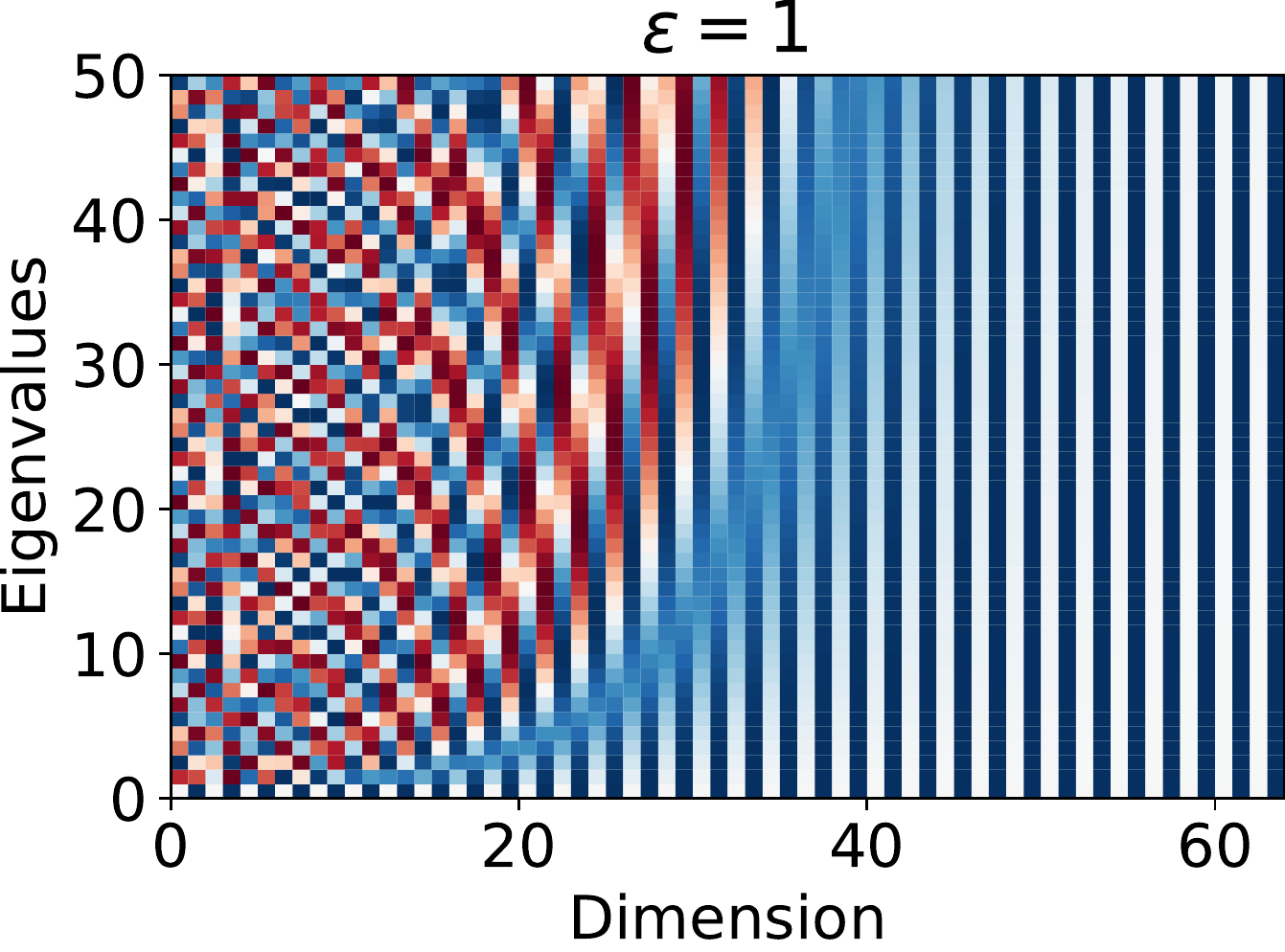}
    }
    \caption{Eigenvalue encoding with different values of $\epsilon$. Best viewed in color.}
    \label{fig: EE}
\end{figure}

Here we visualize the outputs of eigenvalue encoding with different values of $\epsilon$.
Specifically, we uniformly sample 50 eigenvalues from the region $[0, 2]$ and map them into representations with $d=64$. The results are shown in Figure \ref{fig: EE}.
We can find that when $\epsilon=1$, only the first 20 dimensions can distinguish different eigenvalues.
As $\epsilon$ increases, the resolution of eigenvalue encoding becomes higher.

\subsection{Time and space overhead}
\label{appendix: overhead}

\vspace{-5pt}
We test the time and space overheads of Specformer and two popular polynomial GNNs, \ie, GPR-GNN and BernNet.
Polynomial GNNs are implemented with sparse matrices and sparse matrix multiplication.
We choose three datasets, \ie, Squirrel, Penn94, and ZINC, two for node classification and one for graph regression.
For ZINC dataset, we sample 2,000 molecular graphs as the inputs of the forward process and omit the edge features that cannot be used by  polynomial GNNs.

\paragraph{Setup.}
In the complexity analysis, we mentioned that spectral decomposition only needs to be calculated once and can be reused in the forward process.
To perform a fair comparison, we run each model for 1000 epochs and report the total time and space costs.
For polynomial GNNs, we set $K=10$ as suggested by the original papers; for Specformer, we use full eigenvectors for Squirrel and ZINC, and 6,000 eigenvectors for Penn94. The hidden dimension is $d=64$ for all methods.

\paragraph{Results.}
From the time overheads in Table \ref{tab: time overhead}, we can find that the spectral decomposition of small graphs, \eg, Squirrel and ZINC, does not bring much computational cost. And the forward time of Specformer is close to GPR-GNN and less than BernNet.
This is because polynomial GNNs need to calculate $\mA \mX$ or $\mL \mX$ recurrently. Specformer only needs to calculate $\mU \text{diag}(\bm{\lambda}) \mU^{\top} \mX$ once, due to the non-local capability. Besides, BernNet needs to calculate all the combinations of $\mL$ and $2\mI-\mL$, i.e., $\sum_{k=1}^{K} \tbinom{K}{k}(2\mI-\mL)^{K-k}(\mL)^{k}$, which requires a lot of computations.
In the Penn94 dataset, we use truncated spectral decomposition to reduce the forward complexity from $\mathcal{O}(n^{2}(d+M) + nd(L+d))$ to $\mathcal{O}(q^{2}(d+M) + nd(L+d))$, where $q$ is the number of eigenvalues.

Table \ref{tab: time overhead} shows the space overheads, where Specformer is higher than polynomials GNNs because of the dense eigenvectors.
One can use fewer eigenvectors to reduce the space cost.

\begin{table}[htbp]
  \centering
  \caption{Time overheads (s)}
    \begin{tabular}{ccccc}
    \toprule
          & GPR-GNN & BernNet & Specformer & Decomposition \\
    \midrule
    Squirrel & 4.65  & 14.86 & 7.11  & 2.71 \\
    Penn94 & 14.77 & 41.26 & 21.74 & 629.6 \\
    ZINC-2k & 15.06 & 49.78 & 30.59 & 1.28 \\
    \bottomrule
    \end{tabular}
  \label{tab: time overhead}
\end{table}%

\begin{table}[htbp]
  \centering
  \caption{Space overheads (MB)}
    \begin{tabular}{cccc}
    \toprule
          & GPR-GNN & BernNet & Specformer \\
    \midrule
    Squirrel & 1,257  & 1,277  & 1,845 \\
    Penn94 & 3,787  & 3,799  & 4,993 \\
    ZINC  & 1,417  & 1,585  & 4,397 \\
    \bottomrule
    \end{tabular}
  \label{tab: space overhead}%
\end{table}%




\subsection{Spatial perspective of synthetic data}
\label{appendix: spatial}

\vspace{-5pt}
In addition to visual comparisons of learned spectral filters, we also compare Specformer and polynomial GNNs from the spatial perspective. In Figure. \ref{fig: spatial}, we show the raw images, ground truth images filtered by Comb filter, \ie $\left| \sin(\pi \lambda) \right|$, and images filtered by Specformer and GPR-GNN.
We can see that Specformer is similar to the ground truth, and the contrast of GPR-GNN is darker than the ground truth, implying that Specformer is better than polynomial GNNs at capturing global information.

\begin{figure}[H]
    \centering
    \subfigure[Image \#19]{
        \includegraphics[width=0.9\linewidth]{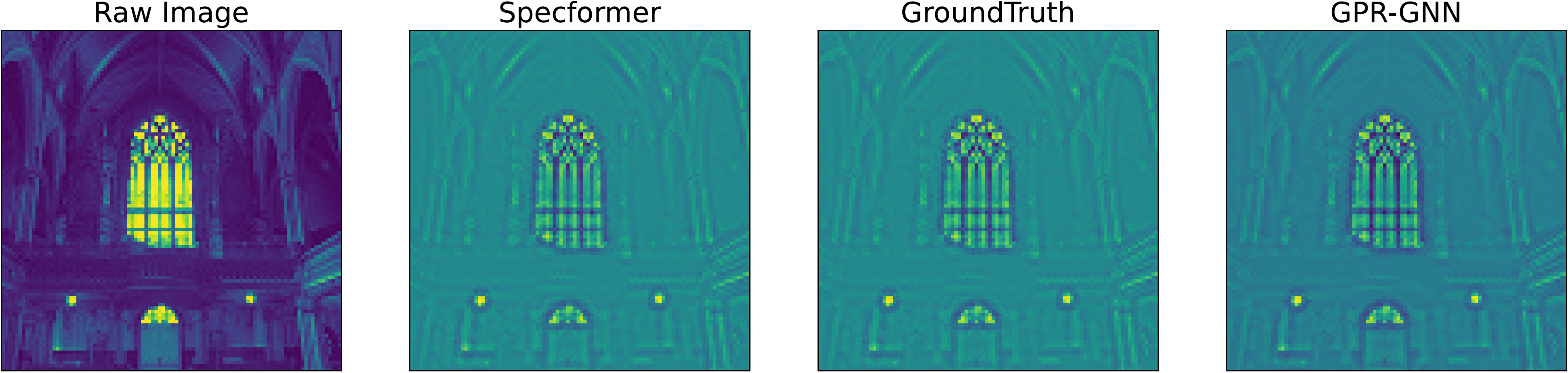}
    }
    \quad
    \subfigure[Image \#28]{
        \includegraphics[width=0.9\linewidth]{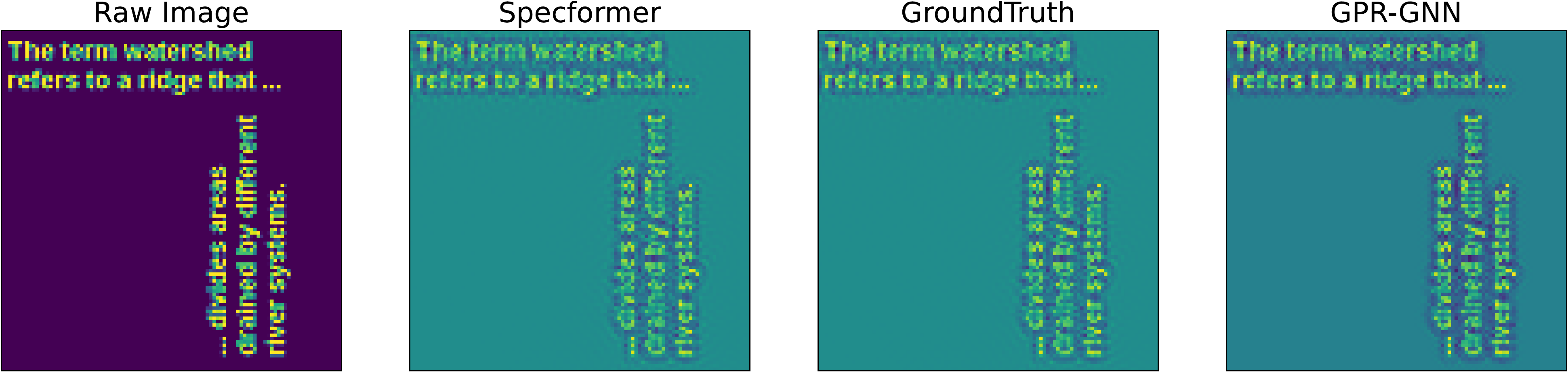}
    }
    \caption{Synthetic data filtered by GPR-GNN and Specformer. Best viewed in color.}
    \label{fig: spatial}
\end{figure}

\subsection{Experiments on large-scale molecular datasets.}

PCQM4Mv2 is a large-scale graph regression dataset \citep{PCQM}, which has 3.7M graphs, and the goal is to regress the HOMO-LUMO gap.
We follow the experimental setting of GPS~\citep{GPS}.
Because the original test set is unreachable, we use the original validation set as the test set and randomly sample 150K molecules for validation 

Due to the time limitation, we only run Specformer-Medium on the largest molecular datasets.
The results are shown in Table \ref{tab:pcqm}.
We can see that due to the share of learnable bases, the number of parameters of Specformer-Medium is relatively small.
That is to say, there is only one Transformer block in Specformer-Medium.
But the performance of Specformer-Medium is better than the baselines with similar parameters, \eg, GCN, GIN, and GPS-small.

\begin{table}[H]
    \centering
    \caption{Results on large-scale graph dataset PCQM4Mv2.}
    \begin{tabular}{lcc}
    \toprule
    \multirow{2}[4]{*}{\textbf{Model}} & \multicolumn{2}{c}{\textbf{PCQM4Mv2}} \\
    \cmidrule{2-3}          & \textbf{MAE($\downarrow$)} & \textbf{Param.}\\
    \midrule
    GCN         & 0.1379 & 2.0M \\
    GCN-VN      & 0.1153 & 4.9M \\
    GIN         & 0.1195 & 3.8M \\
    GIN-VN      & 0.1083 & 6.7M \\
    \midrule
    GPS-small   & 0.0938 & 6.2M \\
    Specformer-medium & 0.0916 & 4.1M \\
    \midrule
    GRPE        & 0.0890 & 46.2M \\
    EGT         & 0.0869 & 89.3M \\
    Graphormer  & 0.0864 & 48.3M \\
    GPS-medium  & 0.0858 & 19.4M \\
    \bottomrule
    \end{tabular}
    \label{tab:pcqm}%
\end{table}

\section{Theoretical results}
\label{appendix: theoretical}

\addtocounter{proposition}{-2}

\subsection{Permutation equivariance}

\begin{proposition}
Specformer is permutation equivariant.
\end{proposition}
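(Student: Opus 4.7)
The plan is to pick any permutation matrix $\mP\in\{0,1\}^{n\times n}$ and trace how each stage of Specformer transforms when $(\mA,\mX)$ is replaced by $(\mP\mA\mP^\top,\mP\mX)$, showing that the final output becomes $\mP\mX^{(L)}$. The argument is largely a bookkeeping exercise, with one subtle point about eigenvector non-uniqueness.

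First I would observe that $\mL\to\mP\mL\mP^\top=(\mP\mU)\mLambda(\mP\mU)^\top$, so the multiset of eigenvalues is invariant while the eigenvector matrix transforms as $\mU\to\mP\mU$. Consequently the token matrix $\mZ=[\lambda_i\,\|\,\rho(\lambda_i)]_{i=1}^{n}$ fed to the encoder is literally unchanged, and since LN, MHA, and FFN all act identically on it, every per-head representation $\mZ_m$ and every filtered eigenvalue vector $\bm{\lambda}_m=\phi(\mZ_m\mW_\lambda)$ is also unchanged. Then each basis transforms covariantly: $\mS_m=\mU\,\mathrm{diag}(\bm{\lambda}_m)\,\mU^\top \mapsto \mP\mS_m\mP^\top$. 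Because $\mI_n=\mP\mI_n\mP^\top$ and the decoder FFN acts pointwise along the channel dimension, the combined basis satisfies $\hat{\mS}\mapsto\mP\hat{\mS}\mP^\top$ in its first two axes. Substituting into Eq.~\eqref{eq: convolution} gives $\hat{\mX}^{(l-1)}_{:,i}\mapsto\mP\hat{\mX}^{(l-1)}_{:,i}$; since $\sigma$, the linear mixing $\mW_x^{(l-1)}$, and the residual connection all commute with left-multiplication by $\mP$, a short induction on $l$ yields $\mX^{(L)}\mapsto\mP\mX^{(L)}$.

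The main obstacle will be eigenvector non-uniqueness: when $\mL$ has repeated eigenvalues, $\mU$ is only defined up to an orthogonal mixing within each eigenspace, so \emph{a priori} the intermediate quantities are not well-defined functions of $\mA$ at all. I would resolve this by noting that whenever $\lambda_i=\lambda_j$, the two corresponding input tokens are identical; since self-attention is permutation equivariant on tokens, the outputs coincide and $\bm{\lambda}_m[i]=\bm{\lambda}_m[j]$. Therefore $\mathrm{diag}(\bm{\lambda}_m)$ acts as a single scalar on each eigenspace, and $\mS_m$ can be rewritten as $\sum_{\mu}\nu_{\mu}^{(m)}\,\Pi_\mu$, where $\Pi_\mu$ is the orthogonal projector onto the eigenspace associated with the distinct eigenvalue $\mu$. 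These projectors depend only on $\mL$, so $\mS_m$ (and hence the entire pipeline) is basis-independent. This makes the map $(\mA,\mX)\mapsto\mX^{(L)}$ well-defined, and combined with the covariance chain above completes the proof of equivariance.
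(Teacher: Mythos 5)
Your proof is correct and, at its core, follows the same component-by-component bookkeeping strategy as the paper's proof: element-wise maps, the encoder, the basis construction $\mU\,\mathrm{diag}(\bm{\lambda}_m)\mU^{\top}$, and the convolution each respect the permutation, and an induction over layers finishes the argument. Two points distinguish your write-up, both to its advantage. First, you correctly observe that a node permutation leaves the eigenvalue tokens untouched ($\mZ$ is literally unchanged while only $\mU \to \mP\mU$), so the encoder requires no equivariance argument at all; the paper instead invokes token-permutation equivariance of self-attention and writes transformation rules such as $\mP\mZ\mP^{\top}$ and $\mP\mU\mP^{\top}$, which conflate the node index with the eigenvalue index (and do not even typecheck for rectangular $\mZ$), though the intended conclusion is the same. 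Your version cleanly separates the two index sets and carries the covariance $\mS_m \mapsto \mP\mS_m\mP^{\top}$ through the channel-wise FFN and the residual convolution. Second, your treatment of eigenvector non-uniqueness is a genuine addition: the paper's proof tacitly fixes one eigendecomposition, whereas you show that repeated eigenvalues produce identical tokens, hence identical filtered values, so each $\mS_m$ is a linear combination of spectral projectors and the whole map is a well-defined function of $(\mA,\mX)$ independent of the basis chosen within each eigenspace. This closes a gap the paper leaves open; the only residual convention is that eigenvalues are listed in sorted order so the token sequence is canonical up to ties, and ties are exactly what your identical-token argument handles.
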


\begin{proof}
We show that Specformer is permutation equivariant by proving that all the components of Specformer are permutation equivariant.
First, the element-wise functions, i.e., eigenvalue encoding, feed-forward networks, and layer normalization, are permutation equivariant because they are applied in node-independent manner.
Second, the self-attention mechanism is permutation equivariant because
$(\mP \mZ \mP^{\top})(\mP \mZ \mP^{\top})^{\top}=\mP (\mZ \mZ^{\top}) \mP^{\top},$
where $\mZ$ is the data representation matrix and $\mP$ is an arbitrary permutation matrix.
Third, the construction of learnable bases is permutation equivariant because
$(\mP \mU \mP^{\top})(\mP \mathbf{\Lambda} \mP^{\top})(\mP \mU \mP^{\top})^{\top} = \mP (\mU \mathbf{\Lambda} \mU^{\top}) \mP^{\top}.$

Based on all the conclusions above, we prove that Specformer is permutation equivariant.
\end{proof}

\subsection{Approximating univariate and multivariate functions}

\begin{theorem}
\label{theorem: approximate}
(Uniform convergence of Fourier series) \citep{fourier}
For any continuous real-valued function $f(x)$ on $[a, b]$ and $f^{'}(x)$ is piece-wise continuous on $[a, b]$ and any $\epsilon>0$, there exists a Fourier series $P(x)$ converges to $f(x)$ uniformly such that
\begin{equation}
    \mathop{max}\limits_{a \leq x \leq b} | P(x) - f(x) | < \epsilon.
\end{equation}
\end{theorem}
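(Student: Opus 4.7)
The plan is to reduce to the classical periodic setting and then bound the Fourier coefficients using integration by parts and Bessel's inequality. First I would perform a change of variables $t = \pi(2x - a - b)/(b-a)$ to map $[a,b]$ bijectively onto $[-\pi, \pi]$, and let $g(t) = f(x(t))$. Then I would extend $g$ to a $2\pi$-periodic function on $\mathbb{R}$. To guarantee continuity of the periodic extension and piecewise continuity of its derivative, one of two routes can be taken: either handle the endpoint mismatch $f(a) \neq f(b)$ by subtracting a linear ``tent'' function whose Fourier series is handled separately, or simply note that adjusting $f$ by a continuous piecewise-linear correction reduces to the case $f(a) = f(b)$, after which the periodic extension is continuous with piecewise continuous derivative.

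Next I would form the trigonometric Fourier series $P_N(t) = \tfrac{a_0}{2} + \sum_{n=1}^{N} (a_n \cos nt + b_n \sin nt)$ with the usual coefficients. The key calculation is integration by parts: since $g$ is continuous and $2\pi$-periodic and $g'$ is piecewise continuous, the boundary terms vanish and one obtains $a_n(g) = -b_n(g')/n$ and $b_n(g) = a_n(g')/n$ for $n \ge 1$. This gives the crucial decay $|a_n(g)|, |b_n(g)| = O(1/n) \cdot \text{(coeff.\ of } g')$.

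Then I would apply the Cauchy--Schwarz inequality to obtain
\begin{equation}
\sum_{n=1}^{\infty} \bigl(|a_n(g)| + |b_n(g)|\bigr) \;\le\; \Bigl(\sum_{n=1}^{\infty} \tfrac{2}{n^2}\Bigr)^{1/2} \Bigl(\sum_{n=1}^{\infty}\bigl(|a_n(g')|^2 + |b_n(g')|^2\bigr)\Bigr)^{1/2},
\end{equation}
and invoke Bessel's inequality applied to $g'$ (which is piecewise continuous and hence square-integrable) to bound the second factor by $\tfrac{1}{\pi}\int_{-\pi}^{\pi}|g'|^2 < \infty$. This summability, combined with $|a_n \cos nt + b_n \sin nt| \le |a_n| + |b_n|$, lets the Weierstrass $M$-test conclude that $P_N$ converges uniformly on $[-\pi,\pi]$ to some continuous function $h$.

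The final step is identifying $h$ with $g$. For this I would appeal to Fejér's theorem: the Cesàro means $\sigma_N$ of $P_N$ converge uniformly to $g$ since $g$ is continuous and periodic. But $P_N \to h$ uniformly implies $\sigma_N \to h$ uniformly, so $h = g$. Undoing the change of variables and the piecewise-linear correction yields a trigonometric polynomial in $x$ that approximates $f$ within $\epsilon$ in the sup norm on $[a,b]$. The main obstacle I anticipate is the bookkeeping around the periodic extension when $f(a) \neq f(b)$, since a naive extension would be discontinuous and destroy the integration-by-parts bound; handling this cleanly via a subtracted correction term (whose own Fourier series must be controlled) is the most delicate part of the argument.
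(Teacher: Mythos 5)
You should first be aware that the paper never proves this theorem: it is imported as a classical result via the citation \citep{fourier} and used as a black box in the proof of Proposition 2, so there is no in-paper argument to compare yours against. Judged against the standard textbook proof, the analytic core of your argument is correct and essentially complete: after rescaling to $[-\pi,\pi]$ and reducing to a continuous periodic $g$ with piecewise continuous derivative, the integration-by-parts identities $a_n(g) = -b_n(g')/n$ and $b_n(g) = a_n(g')/n$ do hold (the boundary terms cancel by continuity and periodicity of $g$), Cauchy--Schwarz plus Bessel's inequality for the square-integrable $g'$ gives $\sum_{n}\bigl(|a_n(g)|+|b_n(g)|\bigr) < \infty$, the Weierstrass $M$-test yields uniform convergence of the partial sums to a continuous limit $h$, and Fej\'er's theorem legitimately identifies $h = g$, since Ces\`aro averaging preserves the limit of a uniformly convergent sequence.

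The one genuine soft spot is the endpoint correction, which you flag but do not resolve. If $f(a) \neq f(b)$, no trigonometric series of period $b-a$ can converge uniformly to $f$ on the closed interval: every partial sum satisfies $P_N(a) = P_N(b)$, hence so does any uniform limit. This has two consequences for your write-up. First, your route of subtracting a linear function and handling ``its Fourier series separately'' cannot work in that form, because the sawtooth's own period-$(b-a)$ Fourier series converges only pointwise, with a Gibbs overshoot at the endpoints. Second, your closing claim that undoing the piecewise-linear correction ``yields a trigonometric polynomial in $x$'' is false as stated: what you actually obtain is a trigonometric polynomial plus a linear function. There are two clean repairs. (i) Keep the linear term in the approximant; this proves ``trigonometric series plus linear term approximates $f$ uniformly,'' which is exactly what the paper needs, since the proof of Proposition 2 uses the class $\rho(\lambda)\vw = w_0\lambda + \sum_i w_{2i}\sin(\epsilon\lambda/10000^{2i/d}) + \sum_i w_{2i-1}\cos(\epsilon\lambda/10000^{2i/d})$, which explicitly contains the linear term $w_0\lambda$. (ii) Obtain a purely trigonometric approximant by extending $f$ evenly about $x=b$ to $[a,\,2b-a]$; the reflected function is continuous with equal endpoint values and piecewise continuous derivative, so your entire argument applies with period $2(b-a)$ and restricts to a uniform approximation of $f$ on $[a,b]$. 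Either repair closes the gap, and repair (ii) in fact establishes the statement in its strongest reading, which requires care precisely because the paper's version omits any condition relating $f(a)$ and $f(b)$.
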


\begin{theorem}
\label{theorem: deepsets}
(Kolmogorov–Arnold representation theorem) \citep{DeepSets}
Let $f:[0,1]^M \rightarrow \mathbb{R}$ be an arbitrary multivariate continuous function. Then it
has the representation
\begin{equation}
    f\left(x_1, \ldots, x_M\right)=\rho\left(\sum_{m=1}^M \lambda_m \phi\left(x_m\right)\right)
\end{equation}
with continuous outer and inner functions $\rho: \mathbb{R}^{2 M+1} \rightarrow \mathbb{R}$ and $\phi: \mathbb{R} \rightarrow \mathbb{R}^{2 M+1}$. The inner function $\phi$ is independent of the function $f$.
\end{theorem}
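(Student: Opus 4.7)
The plan is to follow Kolmogorov's (1957) classical strategy, as later simplified by Lorentz, Kahane and Sprecher, and to adapt it to the single-outer-function form of the statement. The argument splits into three stages: constructing a universal inner function $\phi$ with a combinatorial separation property, constructing the outer function $\rho$ by iterative residual approximation, and consolidating the resulting pieces into a single continuous map $\mathbb{R}^{2M+1}\to\mathbb{R}$.

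First I would construct $\phi:[0,1]\to[0,1]^{2M+1}$ whose $2M+1$ coordinate functions $\phi_q$ are each continuous and monotone non-decreasing, together with scalars $\lambda_1,\dots,\lambda_M$ that are linearly independent over $\mathbb{Q}$. The target property is that for every pair $x\ne y$ in $[0,1]^M$ the encoding sums $\Psi_q(x):=\sum_m \lambda_m \phi_q(x_m)$ satisfy $\Psi_q(x)\ne\Psi_q(y)$ on at least $M+1$ of the indices $q\in\{0,\dots,2M\}$, with a quantitative lower bound on the gap that is uniform on compact sets bounded away from the diagonal. The cleanest route is Kahane's Baire-category argument inside the complete metric space of monotone continuous functions equipped with the sup-norm: for each fixed pair of disjoint closed boxes in $[0,1]^M$ and each choice of $M$ indices, the set of $\phi$ that fail to separate the boxes off those indices is nowhere dense, and a countable intersection over a basis of disjoint box-pairs combined with continuity delivers the separation property everywhere. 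Sprecher's explicit base-$\gamma$ digit construction for a sufficiently large integer $\gamma$ provides a more constructive alternative.

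Second, given $\phi$, I would build $\rho$ by an iterative peeling argument. Initialize the residual $g_0:=f$. At stage $k$, use the separation property together with Tietze's extension theorem (extending from the compact image of $\Psi_q$ inside $\mathbb{R}$) to produce a continuous univariate function $h_k$ of small sup-norm such that $h_k(\Psi_q(x))$ approximates $(2M+1)^{-1}g_{k-1}(x)$ on the at least $M+1$ clean indices and is uniformly small on the remaining at most $M$ indices. Summing over $q$ yields $\|g_k\|_\infty \le \tfrac{M}{2M+1}\|g_{k-1}\|_\infty$ for $g_k := g_{k-1}-\sum_q h_k\circ \Psi_q$, giving geometric convergence of the residuals. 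Finally, set $\rho(z_0,\dots,z_{2M}):=\sum_{k\ge 1}\sum_{q=0}^{2M} h_k(z_q)$; this series is jointly continuous on $\mathbb{R}^{2M+1}$ because $\sum_k\|h_k\|_\infty$ converges geometrically, and evaluating at $z_q=\Psi_q(x)$ recovers $f(x)$. Note that since $\phi$, $\lambda_m$ and the construction of the $h_k$ depend only on the separation property and the residuals, the inner function $\phi$ does not depend on $f$, matching the statement.

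The main obstacle is the construction of $\phi$ in step one: controlling collisions $\Psi_q(x)=\Psi_q(y)$ simultaneously across all $q$, while keeping each $\phi_q$ continuous and monotone and preserving the rational independence of the $\lambda_m$, is the technical heart of the Kolmogorov theorem. The Baire-category formulation reduces this to a routine nowhere-density check for each finite configuration, but passing from separation on a countable dense family of box-pairs to a uniform quantitative separation on compact subsets away from the diagonal requires a delicate compactness-plus-continuity step; this uniform version is precisely what makes the Tietze extension in step two succeed with \emph{continuous} $h_k$ rather than merely bounded measurable ones. The remaining pieces (the Tietze extension and the geometric bound on the residual) are standard once the separation property is in place.
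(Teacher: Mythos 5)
The first thing to note is that the paper does not prove this statement at all: Theorem~\ref{theorem: deepsets} is the classical Kolmogorov--Arnold representation theorem, stated with a citation \citep{DeepSets} and consumed as a black box in the proof of Proposition~\ref{propos: approximate}. So there is no proof in the paper to compare you against; your proposal has to be judged against the classical Kolmogorov--Lorentz--Sprecher--Kahane argument that it invokes, and at the level of structure it reproduces that argument faithfully: a universal family of monotone continuous inner functions built generically (Kahane's Baire-category argument) or explicitly (Sprecher's digit construction), an outer function assembled by iterative residual peeling, and the single-outer-function form of the statement recovered by setting $\rho(z_0,\dots,z_{2M})=\sum_{q} H(z_q)$ with $H=\sum_k h_k$, which is continuous exactly because $\sum_k \|h_k\|_\infty$ converges geometrically. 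Your closing remark that $\phi$ and the $\lambda_m$ are fixed before $f$ enters the construction correctly accounts for the ``$\phi$ is independent of $f$'' clause.

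There is, however, a genuine gap at the quantitative heart of your second stage. The bound $\|g_k\|_\infty \le \tfrac{M}{2M+1}\|g_{k-1}\|_\infty$ rests on the claim that on the at most $M$ ``bad'' indices (those $q$ for which $x$ lies in no cube of the $q$-th family) the value $h_k(\Psi_q(x))$ is uniformly small. The construction does not give you this, and in general it is false: $h_k$ is a single function whose values on the cube-image intervals are prescribed to be of size about $\tfrac{1}{2M+1}\|g_{k-1}\|_\infty$, and for a bad index the point $\Psi_q(x)$ lies in the image of a gap region, which by continuity of $\Psi_q$ sweeps across the line and can land inside cube-image intervals (of the same or of other families) where $h_k$ is not small. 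Each bad index can therefore contribute up to $\|h_k\|_\infty \approx \tfrac{1}{2M+1}\|g_{k-1}\|_\infty$, and the honest estimate is $\|g_k\|_\infty \le \bigl(\tfrac{M}{2M+1}+\tfrac{M}{2M+1}\bigr)\|g_{k-1}\|_\infty$ plus an oscillation term, i.e.\ a contraction factor near $\tfrac{2M}{2M+1}$ rather than $\tfrac{M}{2M+1}$. Since this is still strictly below $1$, geometric convergence and the theorem survive, but the step as written is unjustified. Two related points should also be made explicit: because one function $h_k$ serves all $2M+1$ coordinates, your category argument must deliver disjointness of the cube images across \emph{all} families simultaneously (Lorentz's single-outer-function refinement), not merely within each family; and pointwise separation of pairs, even with a uniform off-diagonal gap, is weaker than the cube-covering property (every point covered by cubes of at least $M+1$ families with family-wise disjoint image intervals) that your peeling lemma actually consumes, so the generic property extracted from the Baire argument should be that covering property itself rather than separation.
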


\begin{proposition}
Specformer can approximate any univariate and multivariate continuous functions.
\end{proposition}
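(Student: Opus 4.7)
The plan is to prove the two approximation claims by appealing to the stated classical results (Theorems \ref{theorem: approximate} and \ref{theorem: deepsets}) and showing that Specformer's architectural components are rich enough to realize the required forms. I will treat the univariate and multivariate cases separately, since the former only needs the eigenvalue encoder together with a single affine read-out, whereas the latter genuinely uses the set-to-set nature of the self-attention.

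For the univariate case, I would observe that the eigenvalue encoding $\rho(\lambda)$ in Eq.~\ref{eq: EE} is, up to reindexing of its coordinates, exactly a finite collection of sines and cosines $\{\sin(\omega_i \lambda), \cos(\omega_i \lambda)\}_{i=0}^{d/2-1}$ with frequencies $\omega_i = \epsilon / 10000^{2i/d}$. Composing this with any affine map (and hence, in particular, with the linear layer that lives inside the FFN of the decoder) yields an arbitrary linear combination of these sinusoids, i.e.\ a trigonometric polynomial in $\lambda$. By Theorem \ref{theorem: approximate}, trigonometric polynomials are dense in the space of continuous functions on $[0,2]$ with piece-wise continuous derivative, so by choosing $d$ large enough and an appropriate frequency schedule (which the hyperparameter $\epsilon$ and the dimension $d$ together allow), the readout can be made to approximate any such $f(\lambda)$ uniformly to within any desired $\epsilon$.

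For the multivariate case, I would match the decoder of Specformer term-by-term to the Kolmogorov--Arnold form $f(x_1,\dots,x_M)=\rho\bigl(\sum_{m=1}^M \lambda_m\,\phi(x_m)\bigr)$. The inner function $\phi:\mathbb{R}\to\mathbb{R}^{2M+1}$ can be realized by composing the eigenvalue encoding with an FFN, since FFNs are universal approximators of continuous functions on a compact set; by the univariate argument above, the encoded representation of each $\lambda_i$ is expressive enough to support such an approximation. The weighted summation $\sum_m \lambda_m \phi(\cdot)$ is then realized by a single self-attention head: setting the value projection to implement $\phi$ and choosing query/key projections that induce the required (possibly uniform) weights produces exactly a convex combination of $\phi(\lambda_m)$'s, which up to rescaling matches the inner sum. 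Finally, the outer $\rho$ is realized by the post-attention FFN of the decoder, again invoking universal approximation. Putting these three pieces together gives the representation guaranteed by Theorem \ref{theorem: deepsets}.

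The main obstacle I expect is the middle step: showing that softmax attention can faithfully realize the arbitrary coefficients $\{\lambda_m\}$ appearing in the Kolmogorov--Arnold sum. Softmax attention is row-stochastic and in general not free to assign arbitrary real weights, so one either has to absorb the scaling into $\phi$ (by letting the value map produce the scaled version $\lambda_m \phi(x_m)$ directly, using the fact that $\lambda_m$ is a component of the token itself) or to argue that uniform averaging suffices because $\phi$ is free. Either reduction is a short argument but has to be stated carefully; the remaining claims are straightforward invocations of Theorems \ref{theorem: approximate} and \ref{theorem: deepsets} together with the universal approximation property of FFNs.
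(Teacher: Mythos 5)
Your proposal follows essentially the same route as the paper: the univariate case is handled by viewing $\rho(\lambda)$ composed with a linear readout as a trigonometric series and invoking Theorem \ref{theorem: approximate} on $[0,2]$, and the multivariate case by mapping the eigenvalue encoding, the self-attention weights, and the FFN decoder onto the inner function, the coefficients $\lambda_m$, and the outer function of the Kolmogorov--Arnold form in Theorem \ref{theorem: deepsets}. In fact you are somewhat more careful than the paper on the one delicate step---the paper simply identifies the row-stochastic softmax weights with the arbitrary coefficients $\lambda_m$, whereas you explicitly flag this and propose absorbing the scaling into the value map or into $\phi$, which is the right fix.
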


\begin{proof}
We first prove that Specformer can approximate any univariate continuous functions.
We set the self-attention matrix to be an identity matrix. Then Specformer becomes a scalar-to-scalar function, and the spectral filter is learned through the eigenvalue encoding $\rho(\lambda)$ in Equation \ref{eq: EE}.
Given a linear transformation $\vw \in \mathcal{R}^{d+1}$, the eigenvalue encoding becomes a Fourier series:
\begin{equation}
    \rho(\lambda)\vw = w_{0}\lambda + \sum_{i=1}^{d/2} w_{2i} \sin(\frac{\epsilon\lambda}{10000^{2i/d}}) + \sum_{i=1}^{d/2} w_{2i-1} \cos(\frac{\epsilon\lambda}{10000^{2i/d}}),
\end{equation}
where the period is determined by $\frac{\epsilon}{10000^{2i/d}}$.
Because the eigenvalues fall in the range $[0, 2]$, based on Theorem \ref{theorem: approximate}, Specformer can approximate any univariate continuous functions in the interval $[0, 2]$.
To choose orthogonal bases, one can set $\rho(\lambda, 2i)= \sin(i \lambda), \rho(\lambda, 2i+1)= \cos(i \lambda)$.

We then prove that Specformer can approximate any multivariate continuous functions.
Theorem \ref{theorem: deepsets} states that any multivariate function is a superposition of continuous functions of a single variable.
Let $\phi$ be the eigenvalue encoding, $\lambda_{m}$ be the self-attention weight, and $\rho$ be the FFN decoder in Equation \ref{eq: decoder}.
Because eigenvalue encoding can approximate any continuous univariate functions and \cite{error_for_relu} proves that deep ReLU networks can approximate the outer function $\rho$, Specformer can approximate any continuous multivariate functions.
\end{proof}


\end{document}